\newtheorem{lemma}{Lemma}[section]
\newtheorem{lem}[lemma]{Lemma}
\newtheorem{definition}[lemma]{Definition}
\newtheorem{theorem}[lemma]{Theorem}
\newtheorem{problem}[lemma]{Problem}
\newtheorem{con}[lemma]{Conjecture}
\newcommand{\cL}{{\mathcal{L}}}
\newcommand{\cD}{{\mathcal{D}}}
\newcommand{\cY}{{\mathcal{Y}}}
\title{AAAI Press Formatting Instructions \\for Authors Using \LaTeX{} --- A Guide}
\author{
    %Authors
    % All authors must be in the same font size and format.
    Written by AAAI Press Staff\textsuperscript{\rm 1}\thanks{With help from the AAAI Publications Committee.}\\
    AAAI Style Contributions by Pater Patel Schneider,
    Sunil Issar,\\
    J. Scott Penberthy,
    George Ferguson,
    Hans Guesgen,
    Francisco Cruz\equalcontrib,
    Marc Pujol-Gonzalez\equalcontrib
}
\title{On the Hardness of Training Deep Neural Networks Discretely\footnote{A preliminary version of this paper will appear in the proceedings of AAAI 2025.}}
\author {Ilan Doron-Arad
}
\title{My Publication Title --- Multiple Authors}
\author {
    % Authors
    First Author Name\textsuperscript{\rm 1,\rm 2},
    Second Author Name\textsuperscript{\rm 2},
    Third Author Name\textsuperscript{\rm 1}
}
\begin{document}
	
	\maketitle
	
	\begin{abstract}
		We study {\em neural network training (NNT)}: optimizing a neural network's parameters to minimize the training loss over a given dataset. NNT has been studied extensively under theoretic lenses, mainly on two-layer networks with linear or ReLU activation functions where the parameters can take any real value (here referred to as {\em continuous} NNT (C-NNT)). However, less is known about deeper neural networks, which exhibit substantially stronger capabilities in practice. In addition, the complexity of the {\em discrete} variant of the problem (D-NNT in short), in which the parameters are taken from a given finite set of options, has remained less explored despite its theoretical and practical significance.

		In this work, we show that the hardness of NNT is dramatically affected by the network depth. 
		Specifically, we show that, under standard complexity assumptions, D-NNT is not in the complexity class NP even for instances with fixed dimensions and dataset size, having a deep architecture. This separates D-NNT from any NP-complete problem.  
		Furthermore, using a polynomial reduction we show that the above result also holds for C-NNT, albeit with more structured instances. We complement these results with a comprehensive list of NP-hardness lower bounds for D-NNT on two-layer networks, showing that fixing the number of dimensions, the dataset size, or the number of neurons in the hidden layer leaves the problem challenging. Finally, we obtain a pseudo-polynomial algorithm for D-NNT on a two-layer network with a fixed dataset size.     
	\end{abstract}
	
	% Uncomment the following to link to your code, datasets, an extended version or similar.
	%
%	\begin{links}
%		%   \link{Code}{https://aaai.org/example/code}
%		%  \link{Datasets}{https://aaai.org/example/datasets}
%		\link{Extended version}{https://aaai.org/example/extended-version}
%	\end{links}

	\section{Introduction}
	
	Deep neural networks are among the leading tools in machine learning today, with an astonishingly vast range of applications \cite{DBLP:books/daglib/0040158}. Neural networks require training: optimizing the weights and biases on a given dataset. While minimizing the generalization error is the primary goal, a fundamental algorithmic challenge is to minimize the training loss on the given dataset. 
	
	Neural network training has been studied extensively from a theoretical perspective. NP-hardness proofs have been known for over three decades  \cite{judd1988complexity,blum1988training,megiddo1988complexity} and recent developments obtained strong hardness results for various special cases of the training problem (e.g., \cite{Goel21, Boob22, BertschingerHJM23,Froese2024}). For the above results, it sufficed to consider two-layer networks which already capture the hardness of the problem. 
	
	On the other hand, less is known about deeper neural networks, which adhere to substantially stronger capabilities in practice \cite{DBLP:books/daglib/0040158}. Obtaining lower bounds of training as a function of the network depth is considered a prominent question in previous works (e.g., \cite{Goel21,Froese2024}). In addition, the complexity of the {\em discrete} variant of the problem, in which the parameters are taken from a given finite set of options, has remained less explored despite its theoretical and practical significance.    
	
	In this paper, we aim to narrow the gaps described above, by theoretically studying the following questions. (i) In polynomially solvable settings for shallow networks (e.g., fixed dimension and dataset size, etc.), how difficult is training on deeper networks? (ii) What are the complexity differences between training on discrete versus continuous parameter spaces? (iii) In particular, can we obtain a hardness results stronger than NP-completeness for training on a discrete parameter space? such results exist for two-layer networks in the continuous parameter space \cite{BertschingerHJM23, NEURIPS2021_9813b270}. We start by formally describing the problem.

	\subsection{Neural Network Training (NNT)}  
	We define the decision problem of whether a given neural network can be optimally trained. This problem is occasionally referred to as {\em empirical risk minimization (ERM)} but hereafter will be called {\em neural network training (NNT)}. The definition captures both discrete and continuous parameter spaces with arbitrary activation and loss functions on an arbitrary directed acyclic graph.

	\subsubsection{Input:} We are given a directed acyclic graph $N = (V = \{s\} \cup H \cup T, E)$ (i.e., the network), where $s$, $H$, $T$ are the input vertex (source), the {\em hidden layers} (neurons), and the output vertices, respectively. The {\em depth} of the network is the longest path from the input to an output and a {\em layer} is a maximal subset of vertices with the same distance from the source; the {\em width}, denoted by $k$, is the maximum cardinality of any layer. We also have a dataset \(\mathcal{D} = \{(x_i, y_i)\}_{i=1}^n\), with input \(x_i \in \mathcal{X} \subseteq \mathbb{R}^{d}\) and output \(y_i \in \mathcal{Y} \subseteq \mathbb{R}^{m \cdot |T|}\).
	
	We are also given an activation function $\sigma^v : \mathbb{R} \rightarrow \mathbb{R}$ for each $v \in H \cup T$ and a loss function \(\cL: \mathcal{Y} \times \mathcal{Y} \rightarrow \mathbb{R}\) satisfying $\cL(a,b) = 0$ if and only if $a = b$. We assume that the activation functions and the loss function are polynomially computable.  Additionally, the input consists of weight spaces $W_e[i] \subseteq \mathbb{R}$ and bias spaces $B_e \subseteq \mathbb{R}$, for all $e \in E$ and dimension $i \in \{1,\ldots, d\}$.\footnote{Technically, for edges $e = (u,v) \in E$ where $u \neq s$, there is only one dimension: $W_e = W_e[i]$ $~\forall i \in \{1,\ldots,d\}$.} Let $W_e = (W_e[i])_{i \in \{1,\ldots, d\}}$ and let $\Theta = (W_e, B_e)_{e \in E}$ be the {\em parameter space}. Finally, we are given an error margin parameter $\gamma \in \mathbb{R}$. 
	
	\subsubsection{The Model:}	Given parameters $\theta = (w_e,b_e)_{e \in E} \in \Theta$, that is, $w_e[i] \in W_e[i]$ and $b_e \in B_e$ for all $e \in E$ and $i \in \{1,\ldots,d\}$, the model $f_{\theta}(x)$ is computed inductively given an input $x \in \mathcal{X}$. Define $z^s(x) = x$. Moreover, for each $v \in H \cup T$, the output $z^{v}(x)$ of $v$ is defined by applying the activation function $\sigma^v$ on the weighted sum of outputs of all vertices with an edge directed towards $v$:
	$z^{v}(x) = \sigma^{v}\left( \sum_{e = (u,v) \in E} w^{e} \cdot z^{u}(x) + b^{e}\right). 
	$ The output of the network is \(f_{\theta}(x) = (z^{t}(x))_{t \in T}\). 
	
	\subsubsection{Objective:} Decide if there are parameters \(\theta \in \Theta\) that admit a training loss bounded by the error margin parameter: $\sum_{i=1}^{n} \cL\left(f_{\theta}(x_i), y_i\right) \leq \gamma$. We refer to the well-studied special case of NNT, where $W_e, B_e = \mathbb{R}$ for every edge $e$, as {\em continuous NNT (C-NNT)}; conversely, if the sets $W_e, B_e$ are finite and are explicitly given as part of the input, the problem is called {\em discrete NNT (D-NNT)}. 
	
	\subsubsection{Motivation:} As gradient decent techniques are a form of C-NNT and ubiquitous in practice, the practical motivation for studying C-NNT is apparent. However, there are also settings in which D-NNT possesses practical significance. {\em Quantization} techniques, transforming C-NNT into D-NNT, are very common in practice \cite{courbariaux2015binaryconnect,rastegari2016xnor,zhu2016trained,zhou2018adaptive,rokh2023comprehensive}. These techniques enable to reduce memory usage and energy consumption while increasing the speed of both training and deployment. In some cases, quantization even allows better model generalization.
	
	Other practical settings in which D-NNT is useful include model deployment in low-memory devices \cite{li2018optimal}, initial parameter values for gradient-based optimization \cite{DBLP:conf/iclr/HuXP20}, or even long-term information storage in biological neural networks \cite{hayer2005molecular,miller2005stability} (see more details in \cite{baldassi2015max}). 
	
	\subsubsection{D-NNT versus C-NNT:} One key difference between D-NNT and C-NNT is the over-parametrized setting (where $k > n$). While a C-NNT instance can always optimally fit the dataset in the over-parametrized setting \cite{zhang2021understanding}, this does not always hold for D-NNT instances. As a naive example, consider a two-layer D-NNT instance with $k=2$ neurons and one data point $(x=1,y=2)$. If the parameter space of the instance contains only zeros, it is impossible to construct a model that optimally fits the dataset. This distinction enables lower bounds and non-trivial algorithms for D-NNT in the over-parametrized setting.

	\subsection{Our Results}
	
	\subsubsection{Hardness of NNT on Deep Networks}
	
	In the following, we partially answer questions (i)-(iii). As mentioned above, there are lower bounds stronger than NP-completeness for NNT on a two-layer network \cite{BertschingerHJM23, NEURIPS2021_9813b270}. However, the intricate part in these hard C-NNT instances is based on the infinite parameter space and infinite precision of real numbers. In contrast, for analogous D-NNT instances on a two-layer network, verifying the correctness of a solution can be done in polynomial time since the entire parameter space is given explicitly in the input. Therefore, D-NNT on shallow networks belongs to NP and a strictly stronger lower bound than NP-completeness for D-NNT would require deeper networks.

	We obtain the following lower bound on the running time of D-NNT in {\em deep} networks.  Specifically, we show that D-NNT is unlikely to be in the complexity class NP, making it significantly more challenging than any NP-complete problem (such as, e.g., the closely related Circuit SAT \cite{lu2003circuit,amizadeh2019learning}).

		\begin{theorem}
		\label{thm:SLP}
		 Assume that the \textnormal{Constructive Univariate Radical Conjecture} \cite{10.1145/3510359} is true and assume \textnormal{NP} $\not \subseteq$ \textnormal{BPP}. Then,  \textnormal{D-NNT} $\notin$ \textnormal{NP}, even for instances with input and output dimension $1$ and weights in $\{0,1,-1\}$.   
	\end{theorem}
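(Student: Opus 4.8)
The plan is to build a polynomial-time many-one reduction to D-NNT (restricted to the stated class) from a problem $\Pi$ that, granting the Constructive Univariate Radical Conjecture together with NP $\not\subseteq$ BPP, provably lies outside NP; since NP is closed under such reductions, D-NNT $\notin$ NP follows at once.

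First I would set up the computational engine: a deep network whose weights are drawn only from $\{0,1,-1\}$ can evaluate an arbitrary division-free straight-line program (SLP) over $\mathbb{Z}$. Each SLP instruction is realized by a constant-size neuron gadget --- $u\leftarrow v\pm w$ by one neuron with $\pm1$ incoming weights and a linear activation, and $u\leftarrow v\cdot w$ via a squaring activation and the polarization identity $vw=\frac{1}{4}\big((v+w)^2-(v-w)^2\big)$, with the constant $\frac{1}{4}$ folded into the activation/loss bookkeeping so that no forbidden weight ever appears. Chaining these across $L$ layers lets a single-input, single-output network compute an integer of bit-length up to $2^{\Theta(L)}$; in particular, the intermediate and output values of a feasible parameter setting cannot in general be written down in time polynomial in the instance size. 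This is exactly why the naive certificate (the parameters alone) fails to put D-NNT in NP, and it is what the reduction exploits.

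Next I would choose $\Pi$ to be the natural SLP problem on univariate objects governed by the conjecture --- testing a radical/equality/perfect-power property of an SLP-presented integer or polynomial --- for which the Constructive Univariate Radical Conjecture yields the implication ``if $\Pi \in$ NP then NP $\subseteq$ BPP'', hence $\Pi\notin$ NP under the hypothesis. Given a $\Pi$-instance (an SLP, or a pair of SLPs, plus a target), I would hard-wire it into a D-NNT instance: the SLP is laid out layer by layer via the gadgets above (weights forced into $\{0,1,-1\}$, constants placed in singleton bias spaces), the required fixed input is fed through the one input coordinate, the SLP value is read off the one output coordinate, and the loss $\cL$, the targets $y_i$, and the margin $\gamma$ are set so that ``loss $\le\gamma$'' holds precisely when the $\Pi$-instance is a ``yes'', with any search or existential aspect of $\Pi$ absorbed into the finitely many weight/bias options. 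Checking that this runs in polynomial time and stays within the restricted instance class is routine; composing it with a hypothetical NP-verifier for D-NNT would place $\Pi$ in NP, contradicting NP $\not\subseteq$ BPP.

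The hard part will be the gadget engineering under the twin restrictions of $\{0,1,-1\}$ weights and dimension-$1$ input/output: realizing faithful integer multiplication (hence general SLP evaluation) without opening up spurious parameter choices that would let the D-NNT instance accept for the wrong reason, and picking activations and loss so that $\Pi$'s acceptance condition is encoded exactly rather than approximately. A secondary subtlety is invoking the Constructive Univariate Radical Conjecture correctly --- making sure the chosen $\Pi$ is precisely the object the conjecture speaks about, and that the univariate, SLP-presented structure is preserved all the way through the reduction.
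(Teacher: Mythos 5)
There is a genuine gap, and it lies in the logical endgame rather than in the network engineering. Your plan is a polynomial many-one reduction from a problem $\Pi$ that ``provably lies outside NP,'' and you assert that the Constructive Univariate Radical Conjecture yields ``if $\Pi\in$ NP then NP $\subseteq$ BPP.'' But the result actually available (Theorem~\ref{thm:PosSLP}, due to B\"urgisser and Jindal) says: if the conjecture holds and \textnormal{PosSLP} $\in$ \emph{BPP}, then NP $\subseteq$ BPP. Under the theorem's hypotheses this gives \textnormal{PosSLP} $\notin$ BPP --- it does \emph{not} give \textnormal{PosSLP} $\notin$ NP, and no problem with the ``$\Pi\in$ NP $\Rightarrow$ NP $\subseteq$ BPP'' property is furnished by the cited work. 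Consequently a many-one reduction from \textnormal{PosSLP} only shows D-NNT $\notin$ BPP (hence $\notin$ P), which falls short of the claim. The missing idea in the paper's proof is that the reduction produces \emph{restricted} D-NNT instances, in which every weight and bias space is a singleton: the instance carries its one and only candidate certificate. A polynomial-time NP verifier for D-NNT, fed this unique certificate, would therefore \emph{decide} restricted D-NNT, and hence \textnormal{PosSLP}, in deterministic polynomial time --- contradicting \textnormal{PosSLP} $\notin$ BPP $\supseteq$ P. You correctly observe that the naive certificate fails because intermediate values have exponentially many bits, but your formal argument never uses this; it is exactly the verification-hardness of the forced certificate, not non-membership of $\Pi$ in NP, that drives the proof.

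On the constructive side, your SLP-simulation gadgets are essentially sound and in fact diverge from the paper in an interesting way: the paper realizes multiplication with ad hoc digit-manipulating activations (packing an integer into the fractional part of a real and unpacking it one layer later), whereas you use a squaring activation and the polarization identity $vw=\frac{1}{4}\bigl((v+w)^2-(v-w)^2\bigr)$ with the $\frac14$ absorbed into an activation. Since the model permits arbitrary polynomial-time computable activations, your variant works and is arguably cleaner; it just needs two parallel neurons per product and weights in $\{1,-1\}$, matching the theorem's restrictions. So the reduction machinery is a legitimate alternative, but the proof does not go through until the final step is replaced by the restricted-instance argument above.
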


	Even though our result considers the discrete version of NNT, we give an easy polynomial reduction from D-NNT to C-NNT. This shows that our lower bound for the discrete problem also applies to its continuous counterpart, albeit for instances with a large output dimension.  
	
	\begin{theorem}
		\label{thm:continous}
		There is a polynomial time reduction from \textnormal{D-NNT} to \textnormal{C-NNT}. 
	\end{theorem}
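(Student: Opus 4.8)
The plan is to give a polynomial-time map sending a D-NNT instance $(N,\Theta,\cD,\{\sigma^v\},\cL,\gamma)$ to a C-NNT instance — same problem but with every weight and bias space equal to $\mathbb R$ — that is a YES-instance if and only if the original one is. Since in C-NNT the reduction can no longer restrict the parameters themselves, the whole task is to emulate ``the parameter of edge $e$ must come from the finite set $W_e[i]$ (resp.\ $B_e$)'' using only the remaining freedoms that the reduction controls: the network topology, the activation functions, the loss function, and the dataset.

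First I would build the emulation of one discrete parameter. Fix an edge $e$ and coordinate $i$ with $W_e[i]=\{a_1,\dots,a_q\}$ (biases are handled identically). Into $N'$ I insert a gadget: a neuron $c$ with the constant activation $\equiv 1$ (polynomially computable) that feeds, through a fresh edge whose weight is free, a ``selector'' neuron $n_{e,i}$ whose activation is the step function whose image is exactly $\{a_1,\dots,a_q\}$. Then, whatever value the free weight takes, $z^{n_{e,i}}$ is a single (input-independent) element of $\{a_1,\dots,a_q\}$, and every element is attainable — this is the only genuinely rigid device in the construction. The output of $n_{e,i}$ is then routed into the place where $w_e[i]$ is used in $N$; since one of the two factors of the product $w_e[i]\cdot z^u(\cdot)$ now ranges over the explicitly listed finite set, this product is realized by a small auxiliary sub-network with polynomially computable activations. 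Doing this for every parameter (and exposing each selector's value at a dedicated output coordinate, so that the new loss can read it off) produces a network $N'$ of polynomial size but with large — still polynomial — output dimension, which is the source of the ``large output dimension'' of the remark. The new dataset is $\cD$ padded in the new output coordinates, the new loss extends $\cL$ on the original coordinates, and the margin is kept at $\gamma$.

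Correctness splits into two directions. For ($\Rightarrow$): given $\theta\in\Theta$ with training loss $\le\gamma$, set each selector's free weight so that its step function outputs the corresponding entry of $\theta$; the multiplication gadgets then make $N'$ compute exactly $f_\theta$ on every $x_i$, so the C-NNT loss equals the D-NNT loss $\le\gamma$. For ($\Leftarrow$): from any parameters of $N'$ achieving loss $\le\gamma$, read off the selector outputs; by construction these form some $\theta\in\Theta$, and the gadgets force $N'$ to compute $f_\theta$, so that $\theta$ has D-NNT loss $\le\gamma$. The map is clearly polynomial-time: it adds $O(1)$ vertices and edges per parameter and all new activation and loss functions are defined directly from the explicitly given finite sets. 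Combining with Theorem~\ref{thm:SLP} then transfers the lower bound to C-NNT.

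The main obstacle is soundness of the ($\Leftarrow$) direction: ruling out that the continuous freedom of the relaxed weight spaces is exploited to fit $\cD'$ in a way that does \emph{not} correspond to a legal discrete choice. The subtlety is that the step-activation selectors are the only rigid components, while everything downstream of them — in particular the multiplication gadgets — is built from ordinary neurons with free weights; the gadgets, together with the extra output coordinates and their targets, must therefore be designed so that fitting the data \emph{forces} those gadgets to evaluate the intended product rather than merely \emph{permitting} them to. Proving this rigidity — i.e.\ that the set of input–output maps realizable by $N'$ on the relevant inputs is exactly $\{f_\theta:\theta\in\Theta\}$ — is the technical heart of the argument.
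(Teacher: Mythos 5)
Your construction has a genuine unresolved gap, and it sits exactly where you point: the multiplication gadget. In the NNT model of this paper, a vertex computes $\sigma^v\bigl(\sum_{e=(u,v)} w^e z^u(x)+b^e\bigr)$, so there is no native way to multiply two neuron outputs; replacing the edge weight $w_e[i]$ by the output of a selector neuron therefore requires emulating a product of two data-dependent quantities by a sub-network. Any such sub-network consists of neurons whose incoming weights are themselves free real parameters in the C-NNT instance, so nothing in your construction forces it to compute the intended product rather than some other function that happens to fit the padded dataset; in that case the read-off selector values $\theta$ need not satisfy $\sum_i \cL(f_\theta(x_i),y_i)\le\gamma$, and the ($\Leftarrow$) direction collapses. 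You explicitly flag this rigidity claim as ``the technical heart of the argument,'' but you do not supply it, and it is not a routine verification: establishing that the realizable input--output maps of $N'$ are exactly $\{f_\theta:\theta\in\Theta\}$ would require additional probe inputs and loss-level consistency checks of essentially the same sophistication as a full proof.

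For comparison, the paper avoids gadget neurons and products altogether: it keeps the original network and lets the original (now continuous) weights remain the actual parameters, makes \emph{every} vertex an output (this is the source of the large output dimension), and adds one probe data point per pair $(e,i)$ carrying a huge sentinel value $f(e,i)$ in a fresh input coordinate. The activation functions are modified so that sentinel-sized values are passed unchanged along a fixed path $P_e$ to the tail of $e$ and annihilated off the path; consequently the output at the head of $e$ on that probe equals $w'_e[i]\cdot f(e,i)$, and the constructed loss function simply checks that $z^v/f(e,i)\in W_e[i]$ (together with the path/off-path pattern), assigning a prohibitive penalty otherwise. Discreteness is thus enforced by the loss on probe points rather than by rigid selector neurons, which is precisely the mechanism your sketch is missing.
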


	\subsubsection{D-NNT on Two-Layer Networks} Our above results shed some light on questions (i) and (iii). In the remainder of this section, we focus on question (ii), discussing D-NNT instances on two-layer neural networks. Our first easy result shows that even in a very restricted scenario, D-NNT is NP-hard. This setting incorporates a dataset of size $1$, a regime that is easily solvable in the continuous setting. This gives a another stark distinction between D-NNT and C-NNT.
	
	\begin{theorem}
		\label{thm:SubsetSum}
		Unless \textnormal{P=NP}, there is no polynomial algorithm that solves \textnormal{D-NNT} even if there is only one hidden layer, the dataset is of size $n = 1$, the dimension is $d = 1$, the activation functions are the identity function, and the loss function is the sum of squares. 
	\end{theorem}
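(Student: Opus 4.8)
The plan is a direct and elementary polynomial reduction from \textsc{Subset Sum}, which is NP-complete: given positive integers $a_1,\ldots,a_k$ and a target $t$ (all encoded in binary), decide whether some $S \subseteq \{1,\ldots,k\}$ satisfies $\sum_{i\in S} a_i = t$. From such an instance I would build a two-layer D-NNT instance as follows. Take the single input vertex $s$, a hidden layer $H = \{h_1,\ldots,h_k\}$, and one output vertex $T = \{t^\ast\}$, with every activation function equal to the identity and $\cL$ the sum of squares. Set $d=1$, let the dataset be the single pair $(x,y) = (1,t)$, and set the error margin $\gamma = 0$. For each $i$, let the edge $(s,h_i)$ have weight space $\{0, a_i\}$ and bias space $\{0\}$, and let the edge $(h_i, t^\ast)$ have weight space $\{1\}$ and bias space $\{0\}$. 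All of these sets are finite and listed explicitly, so this is a legitimate D-NNT instance, and its encoding is polynomial in that of the \textsc{Subset Sum} instance since we merely copy the numbers $a_i$ and $t$ and add $O(k)$ bookkeeping.

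Next I would verify correctness. Under identity activations and zero biases, for any parameters $\theta$ we have $z^{h_i}(1) = w_{(s,h_i)} \in \{0, a_i\}$, hence $f_\theta(1) = z^{t^\ast}(1) = \sum_{i=1}^{k} w_{(h_i,t^\ast)} \cdot z^{h_i}(1) = \sum_{i=1}^{k} w_{(s,h_i)}$. Writing $S = \{\, i : w_{(s,h_i)} = a_i \,\}$, this equals $\sum_{i\in S} a_i$. Since the loss is the sum of squares and there is a single data point, $\sum_{i=1}^{n} \cL(f_\theta(x_i), y_i) = (f_\theta(1) - t)^2$, which is $\le \gamma = 0$ if and only if $\sum_{i\in S} a_i = t$. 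Therefore the constructed D-NNT instance is a yes-instance exactly when the \textsc{Subset Sum} instance is, and since the reduction runs in polynomial time, the existence of a polynomial algorithm for D-NNT in this restricted regime would imply $\mathrm{P} = \mathrm{NP}$.

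I do not expect a real obstacle here; as the paper notes, this is an easy result, and its point is the contrast with C-NNT, where an $n=1$, $d=1$ instance is trivially solved by fitting an affine map. The only things needing care are formal: checking that the restricted instances genuinely match the input format of NNT (a single output vertex, per-edge weight and bias spaces, polynomially computable $\sigma^v$ and $\cL$), that $\gamma = 0$ together with a nonnegative loss forces exact interpolation, and that $H$ is genuinely a hidden layer (it is used, even though its activations are immaterial). One may equivalently push the selection onto the edges $(h_i, t^\ast)$ by giving $(s,h_i)$ the singleton weight $\{a_i\}$ and $(h_i,t^\ast)$ the weight space $\{0,1\}$; this variant is interchangeable and perhaps cleaner to state.
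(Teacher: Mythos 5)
Your reduction is exactly the one in the paper: the same two-layer network with weight spaces $\{0,a_i\}$ on the edges $(s,h_i)$, singleton weights $\{1\}$ on the edges into the output, zero biases, dataset $\{(1,t)\}$, and $\gamma=0$, with the same correctness argument in both directions. The proposal is correct and matches the paper's proof essentially verbatim.
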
 In the above result, the hardness of D-NNT follows directly from the maximum number allowed for a weight or a bias. Therefore, henceforth the weights and biases are given in unary; this allows assessing the complexity assuming the parameters are relatively small (polynomial in the input size).  In this setting, we obtain the following strong lower bound on the running time even if the weights are encoded in unary.
	
	\begin{theorem}
		\label{thm:ETH}
		Assuming the  \textnormal{Exponential Time Hypothesis (ETH)}, no algorithm  solves \textnormal{D-NNT} in time $f(k) \cdot N^{o \left(\frac{k}{\log k}\right)}$ where $N$ is the encoding size of the instance, $k$ is the width of the network, and $f$ is any computable function. The result holds even for instances with the identity activation function and input dimension $d = 1$.
	\end{theorem}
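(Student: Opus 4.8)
The plan is to prove Theorem~\ref{thm:ETH} by a parameter‑faithful polynomial reduction from a problem whose ETH lower bound already carries the $k/\log k$ exponent, namely \textsc{Partitioned Subgraph Isomorphism} (PSI) parameterized by the number $k$ of edges of the pattern graph: by a theorem of Marx, PSI admits no $f(k)\cdot n^{o(k/\log k)}$ algorithm under ETH, where $n$ is the number of vertices of the host graph, and one may take the patterns to be connected and of bounded degree, so that $k=\Theta(\text{number of pattern vertices})$. (An alternative is to reduce from $3$‑SAT after grouping the variables into blocks of $\Theta(\log n)$ variables, each block realized by one neuron carrying a polynomial‑size weight set; this gives the same exponent, but the PSI route makes the parameter accounting transparent.) The target is a polynomial‑time map sending a PSI instance to a D‑NNT instance with the identity activation on every vertex, input dimension $d=1$, the sum‑of‑squares loss, error margin $\gamma=0$, weights drawn from explicitly listed sets of polynomially bounded nonnegative integers (hence unary‑encodable), width $\Theta(k)$, and total encoding size $\mathrm{poly}(n)$.

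The construction uses two kinds of gadgets. A \emph{selection gadget}: for each vertex class $V_i$ of the PSI partition we add a hidden neuron whose unique incoming weight is chosen from an explicit integer set of size $|V_i|\le n$, so that fixing that weight corresponds to choosing a host vertex $v_i\in V_i$; there are $\Theta(k)$ such neurons. An \emph{edge‑consistency gadget}: for each pattern edge $\{i,j\}$ we add a dedicated output vertex $t_{ij}$, so $|T|=k$, and route the relevant selection neurons through a small bespoke subnetwork into $t_{ij}$. The key structural fact we exploit is that with the identity activation and $d=1$ the model is affine in $x$ but a \emph{multilinear} polynomial in the weight variables (each path contributes the product of its distinct edge weights, times $x$ or a bias term). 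Taking a single data point with $x_1=1$, with all bias spaces singletons so the additive part is a fixed vector, and $\gamma=0$, the constraint $\cL(f_\theta(x_1),y_1)=0$ forces each coordinate $\sum_{\text{paths to }t_{ij}}\prod_e w_e$ to equal a prescribed target; we will design the weight sets and the subnetwork so that this multilinear identity hits its target exactly when the chosen $v_i,v_j$ are adjacent in the host graph, and misses it otherwise. Since the $k$ output coordinates are independent, the D‑NNT instance has a parameter of loss $0$ if and only if the PSI instance is a yes‑instance.

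I expect the main obstacle to be precisely the edge‑consistency gadget under the severe restriction to identity activations and $d=1$: without any nonlinearity the network cannot test anything directly, so all the discrete logic must be encoded into the multilinear weight polynomial, and realizing ``$v_i$ and $v_j$ are adjacent'' as the vanishing (to the target) of a fixed low‑degree multilinear expression in the selected weights is nontrivial — a plain bilinear form in two integer labels cannot cut out an arbitrary adjacency list. The fixes I would try are (i) routing the relevant selections along a short path so that a genuine product of their weights appears, rather than only a sum; (ii) introducing, for each pattern edge, fresh helper weights whose sets encode that edge's adjacency list, so the per‑edge output becomes a multilinear function of enough variables to interpolate the desired predicate; and (iii) choosing the integer labels of host vertices so that the needed multilinear relation separates adjacent from non‑adjacent pairs cleanly and without accidental cancellations among the several path‑products feeding a common output. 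Simultaneously one must keep every weight polynomially bounded (so the unary‑encoding regime of the theorem is respected), keep the extra depth of each gadget $O(1)$ (or at most $O(\log k)$), and — crucially — keep the width at $\Theta(k)$, since a blow‑up to $\Theta(k\log k)$ would dilute the exponent to $k/\log^2 k$.

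Finally, the running‑time transfer is routine once the gadgets are in place: width is $\Theta(k)$ (one selection neuron per vertex class, an $O(1)$‑size block per pattern edge, $k$ outputs), the biases are fixed, every weight set is an explicit list of $O(n)$ polynomially bounded integers given in unary, and enforcing consistency of a vertex's selection across all incident pattern edges is handled with $O(1)$ additional equality outputs per edge, so the encoding size is $N=\mathrm{poly}(n)$. Hence an algorithm solving D‑NNT in time $f(k')\cdot N^{o(k'/\log k')}$ with $k'=\Theta(k)$ would solve PSI in time $f'(k)\cdot \mathrm{poly}(n)^{o(k/\log k)}=f'(k)\cdot n^{o(k/\log k)}$, contradicting Marx's bound and therefore ETH; since the hard instances we produce have the identity activation and $d=1$, this is exactly Theorem~\ref{thm:ETH}.
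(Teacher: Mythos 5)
Your high-level strategy matches the paper's: both reduce from a problem covered by Marx's $f(k)\cdot n^{o(k/\log k)}$ lower bound (the paper uses 2-CSP, you use Partitioned Subgraph Isomorphism, which for this purpose is the same theorem), with one selection neuron per variable/vertex-class whose incoming weight set of size $\le n$ encodes the chosen value, one output per constraint/pattern-edge, width $\Theta(k)$, a single data point $x=1$, and unary polynomially bounded weights. However, your proposal has a genuine gap at its core: the edge-consistency gadget is never constructed. You yourself flag that, under your self-imposed restrictions (identity activations, $d=1$, sum-of-squares loss, $\gamma=0$), realizing ``$v_i$ and $v_j$ are adjacent'' as the exact attainment of a target by a fixed multilinear polynomial in the selected weights is ``nontrivial,'' and you then list fixes you ``would try.'' That gadget \emph{is} the reduction; without it there is no proof. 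Note also that the sum-of-squares loss is a restriction you imposed on yourself --- the theorem statement only fixes the activation function and the input dimension, and the NNT definition in this paper allows an arbitrary polynomially computable loss.

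The paper exploits exactly that freedom to sidestep your obstacle entirely: the output at $t_{(u,v)}$ is the linear combination $g(\phi(u)) + 2n\cdot g(\phi(v))$ (second-layer weights fixed to $1$ and $2n$), and the \emph{loss function itself} decodes this value via division and mod, checks membership in the constraint $C_{(u,v)}$, and returns $1$ or $2$ against threshold $\gamma=1$. No polynomial-identity engineering is needed. If you insist on the sum-of-squares route, the natural completion is to add, for each pattern edge $\{i,j\}$, a helper selection neuron whose weight set is the encoded adjacency list $\{n\cdot g(a)+g(b) : a\in V_i,\ b\in V_j,\ ab\in E(H)\}$ and an output forcing $w_{ij} - n\,w_i - w_j = 0$; with labels in $\{0,\dots,n-1\}$ the base-$n$ decomposition is unique, so $\gamma=0$ forces consistency. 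But this must actually be written out and verified (uniqueness, no cancellations, width still $\Theta(k)$); as submitted, the proposal asserts the existence of the key gadget rather than exhibiting it.
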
 
	
	While there are ETH-based lower bounds for NNT, we are only aware of such bounds for C-NNT instances with {\em Rectified Linear Unit (ReLU)} activation functions in which the parameter is $d$ rather than $k$ \cite{froese2022computational}. In contrast, in the above result (and also in the next) the activation function used is the identity function. Such a result is unlikely for C-NNT.  
	
	Finally, we obtain a hardness result for instances with a single neuron in the hidden layer (width $k=1$). This result is different from other hardness results with one neuron \cite{dey20,Goel21,Froese22}, since we consider D-NNT rather than C-NNT and our result applies to the identity function as the activation function.  
	
	\begin{theorem}
		\label{thm:SetCover}
		Assuming  \textnormal{P} $\neq$ \textnormal{NP}, there is no polynomial algorithm for \textnormal{D-NNT} even with one input, one output, one neuron in the hidden layer $k = 1$, and the identity activation function. 
	\end{theorem}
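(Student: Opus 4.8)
The plan is to give a polynomial-time reduction from Exact Cover by 3-Sets (\textnormal{X3C}) --- the NP-complete restriction of Set Cover in which every set has size exactly three and one asks for an exact partition --- to the stated restricted form of D-NNT. (In fact this restricted D-NNT lies in \textnormal{NP}: with width one and identity activations the network output is a single product of at most polynomially many polynomially-bounded weights, hence has polynomially many bits and is computable in polynomial time; so the reduction actually yields NP-completeness, consistent with the earlier remark that D-NNT on shallow networks is in \textnormal{NP}.) Recall an X3C instance is a ground set $U$ with $|U|=3q$ together with a family $\mathcal{C}=\{C_1,\dots,C_m\}$ of $3$-element subsets, and it is a yes-instance iff some subfamily of $\mathcal{C}$ partitions $U$.

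First I would re-encode X3C as a subset-product question with \emph{small} multipliers. Assign to each $u\in U$ a distinct prime $\pi_u$; since the $r$-th prime is $O(r\log r)$, every $\pi_u$ is polynomially bounded. For each set $C_j=\{u,v,w\}$ set $a_j=\pi_u\pi_v\pi_w$ --- a product of only three small primes, hence still polynomially bounded --- and let $\Pi=\prod_{u\in U}\pi_u$. By unique factorization, a subfamily indexed by $\mathcal{T}\subseteq\{1,\dots,m\}$ satisfies $\prod_{j\in\mathcal{T}}a_j=\Pi$ precisely when each $\pi_u$ occurs with multiplicity one, i.e. when $\{C_j:j\in\mathcal{T}\}$ is an exact cover of $U$. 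The D-NNT instance then realizes this product test: take the path network $s=v_0\to v_1\to\cdots\to v_m=t$ (the hidden layers $\{v_1\},\dots,\{v_{m-1}\}$ each contain a single neuron, so $k=1$; the input and output dimensions are $1$), use the identity activation everywhere, put weight space $W_{e_j}=\{1,a_j\}$ and bias space $B_{e_j}=\{0\}$ on edge $e_j=(v_{j-1},v_j)$, take the single data point $(x,y)=(1,\Pi)$, error margin $\gamma=0$, and any valid loss, e.g. $\cL(a,b)=(a-b)^2$. With all biases zero a trivial induction gives $f_\theta(x)=\bigl(\prod_{j=1}^m w_{e_j}\bigr)x$, so a parameter choice has loss $\le\gamma$ iff $\prod_{j=1}^m w_{e_j}=\Pi$, which (writing $\mathcal{T}=\{j:w_{e_j}=a_j\}$) holds iff $\{C_j:j\in\mathcal{T}\}$ is an exact cover of $U$; hence D-NNT accepts iff the X3C instance does. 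Everything is polynomial: there are $m$ edges, each weight/bias space has at most two polynomially-bounded entries (so they are even unary-encodable), and the only large number, $\Pi$, is a product of $3q$ polynomially-sized primes and so has polynomially many bits.

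The step I expect to be the crux --- and would argue most carefully --- is precisely the tension between the requirement that every \emph{parameter} be small (polynomially bounded, hence presentable in unary as in the preceding theorems) and the fact that the value the network has to output, $\Pi$, is exponentially large. This is exactly what forces the reduction through the exact-cover / $3$-uniform formulation rather than general Set Cover: because each set has only three elements, every multiplier $a_j=\pi_u\pi_v\pi_w$ stays polynomially bounded, while the exponentially large target $\Pi$ is pushed onto the \emph{data} side as the label $y$, where no unary restriction applies. With that point settled, the remaining obligations are routine: that an identity-activated path network with zero biases outputs the product of its edge weights times the input, and that margin $\gamma=0$ together with $\cL(a,b)=0\iff a=b$ forces exact interpolation of the single data point.
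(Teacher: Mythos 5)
Your reduction is internally sound as far as it goes, but it does not prove the stated theorem: it establishes hardness for a \emph{deep} network (a path $s=v_0\to v_1\to\cdots\to v_m=t$ with $m-1$ hidden neurons spread over $m-1$ hidden layers), whereas the theorem --- which sits in the section on two-layer networks and is explicitly contrasted with prior single-neuron results --- concerns a network with \emph{one hidden layer containing a single neuron}. In that regime your entire mechanism collapses: with identity activations, a two-layer network with one hidden neuron computes an affine function $w_{(h,t)}\bigl(\langle w_{(s,h)},x\rangle+b_{(s,h)}\bigr)+b_{(h,t)}$ of the input, so there is no depth along which to accumulate a product, and the subset-product encoding of X3C cannot be realized. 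The hardness for the actual statement has to come from somewhere else, namely the input \emph{dimension} and the \emph{dataset size}: the paper's proof reduces from exact set cover by taking a single hidden neuron with input dimension $m$ (one coordinate per set, each with weight space $\{0,1\}$) and $n+1$ data points (one per element, checking that it is covered exactly once, plus one point enforcing that exactly $K$ coordinates are active). Your construction instead exploits depth while keeping $d=1$ and $|\mathcal{D}|=1$, which is an interesting but different restriction --- and one already closer in spirit to the paper's deep-network results than to Theorem~\ref{thm:SetCover}.

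A secondary point: your observation that the exponentially large target $\Pi$ can be hidden in the label $y$ while all weights stay unary-encodable is clever, but it is doing work precisely because you allowed yourself depth; in the two-layer single-neuron setting the paper needs no large numbers at all (all weights are in $\{0,1\}$ and all labels are $1$ or $K$). So the crux you identified (parameter size versus output size) is not the crux of this theorem. To repair the proof you would need to redesign the reduction around a single affine unit with a high-dimensional discrete weight vector and multiple data points, essentially as the paper does.
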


	We complement the above lower bounds with a pseudo-polynomial algorithm for D-NNT on a two-layer network with general activation functions (on the hidden layer only) and a general loss function. This algorithm studies the {\em over-parametrized} setting, in which $k > n$. That is, we assume that the number of neurons in the hidden layer is unbounded, the input dimension is unbounded, and the output dimension is one. On the other hand, we assume that the dataset size is a fixed constant. Finally, the running time is pseudo-polynomial in the maximum number that can be computed via the network.

	\begin{theorem}
		\label{thm:algorithm}
		There is an algorithm that decides \textnormal{D-NNT} on a two-layer network with output dimension $1$ in time $\textnormal{poly}(|I|) \cdot M^{O(n_0)}$, where $M$ is the maximum number that can be computed via the network and $n_0$ is the size of the dataset.  
	\end{theorem}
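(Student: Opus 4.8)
The plan is to solve the instance by dynamic programming over the two layers, using as states the \emph{signatures} of the neurons: for a vertex $v$ and a fixed choice of parameters let $\mathrm{sig}(v) := \bigl(z^{v}(x_1),\dots,z^{v}(x_{n_0})\bigr)$. Let $t$ denote the unique output vertex, let $H=\{h_1,\dots,h_k\}$ be the hidden neurons, and write $x^{(j)} := (x_1[j],\dots,x_{n_0}[j])$ for $j\in\{1,\dots,d\}$. I read $M$ as a bound on the absolute value of \emph{every} number the network computes on a data point under valid parameters, including each intermediate partial sum formed while accumulating a weighted sum; then every signature, and every such partial sum viewed coordinatewise over the $n_0$ data points, lies in the grid $G := \{-M,\dots,M\}^{n_0}$ with $|G|\le (2M+1)^{n_0}=M^{O(n_0)}$. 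This single fact is what keeps the state space pseudo-polynomial.

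\emph{Phase 1 (hidden layer).} For each $h\in H$ I would compute the set $S_h\subseteq G$ of achievable signatures of $h$. Since the pre-activation of $h$ on $x_i$ equals $\sum_{j=1}^{d} w_{(s,h)}[j]\,x_i[j] + b_{(s,h)}$, the set of achievable pre-activation vectors is built by a sweep over the input coordinates: start with $P\leftarrow\{\mathbf 0\}$, and for $j=1,\dots,d$ set $P\leftarrow\bigl(\{\,p+w\,x^{(j)} : p\in P,\ w\in W_{(s,h)}[j]\,\}\bigr)\cap G$, and finally fold in the bias by $P\leftarrow\bigl(\{\,p+b\,\mathbf 1 : p\in P,\ b\in B_{(s,h)}\,\}\bigr)\cap G$; applying $\sigma^{h}$ coordinatewise to the resulting set yields $S_h$. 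Each step enlarges the set by at most a factor $\max_j|W_{(s,h)}[j]|$ before re-intersecting with $G$, so every intermediate set has size $\le|G|$ and the phase costs $\mathrm{poly}(|I|)\cdot M^{O(n_0)}$; along the way I store, for each vector in $S_h$, one witnessing parameter tuple.

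\emph{Phase 2 (output layer and decision).} For each $h$ form the contribution set $C_h := \bigl(\{\,w\,v+b\,\mathbf 1 : v\in S_h,\ w\in W_{(h,t)},\ b\in B_{(h,t)}\,\}\bigr)\cap G$, computed in $\mathrm{poly}(|I|)\cdot M^{O(n_0)}$ with $|C_h|\le|G|$; a skip edge $s\to t$, if present, contributes one more such set, enumerated exactly as in Phase 1. Now sweep over the neurons: $P_0\leftarrow\{\mathbf 0\}$ and $P_\ell\leftarrow\bigl(\{\,p+c : p\in P_{\ell-1},\ c\in C_{h_\ell}\,\}\bigr)\cap G$ for $\ell=1,\dots,k$, carrying witnesses. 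Then $P_k$ is exactly the set of achievable pre-activation vectors at $t$, and applying $\sigma^{t}$ coordinatewise gives the set $F$ of achievable output vectors of the network. The instance is a YES-instance iff some $f\in F$ satisfies $\sum_{i=1}^{n_0}\cL(f_i,y_i)\le\gamma$, which is tested in $\mathrm{poly}(|I|)\cdot|F|$ time; the stored witnesses then reconstruct a full $\theta\in\Theta$ if a solution is wanted.

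Correctness reduces to the claim that intersecting with $G$ loses nothing: a coordinatewise partial sum that exits $G$ cannot occur in any legitimate evaluation of the network under the reading of $M$ above, so discarding it is sound, while every achievable signature—being built entirely from partial sums that stay in $G$—survives every pruning; hence $S_h$, the $C_h$, $P_k$, and $F$ are precisely the intended achievable sets. The running time is $\mathrm{poly}(|I|)$ sweep steps, each touching at most $|G|$ grid vectors with $\mathrm{poly}(|I|)$ arithmetic per vector, i.e. $\mathrm{poly}(|I|)\cdot M^{O(n_0)}$ overall. I expect the one delicate point—and the crux of the correctness argument—to be exactly this pruning: one must pin down the computational model so that $M$ dominates \emph{every} intermediate value, not merely the neuron outputs, since otherwise a partial sum could grow without bound and cancel later, simultaneously breaking the size bound on the DP states and the soundness of dropping out-of-range vectors. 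Everything else is routine bookkeeping.
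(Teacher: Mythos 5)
Your proposal is correct and follows essentially the same approach as the paper: a first dynamic program sweeping over the input dimensions to compute, for each hidden neuron, the set of achievable pre-activation vectors indexed by the $n_0$ data points, followed by a second dynamic program summing the activated contributions over the neurons, all over a grid of size $M^{O(n_0)}$ with witness backtracking. Your explicit handling of the second-layer weights and biases, and your remark that $M$ must dominate all intermediate partial sums (which the paper ensures by setting $M = d\cdot W_{\max}\cdot k$ and scaling the parameter space to naturals), are the only points where you are slightly more careful than the paper's writeup.
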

	
	By Theorem~\ref{thm:ETH} and Theorem~\ref{thm:SetCover}, obtaining qualitatively better running time for the above regime is improbable.  This algorithm is intended as a theoretic proof-of-concept and is unlikely to perform well in practice.  
	
	\subsection{Discussion}
	This section describes the implications and limitations of our results and lists some open questions. 
	
	\subsubsection{Hardness of NNT on Deep Networks:} Theorem~\ref{thm:SLP} indicates that even very simplified D-NNT instances on a deep network architecture are strictly harder than any NP-complete problem. This result implies that, unlike other NNT lower bounds, the depth of the network can be isolated as the sole factor contributing to the hardness of training.  
	This is in contrast to other stronger than NP-completeness lower bounds that are applied to two-layer networks \cite{BertschingerHJM23, NEURIPS2021_9813b270}. 
	
	Several questions remain open here. It would be interesting to obtain even stronger hardness results for deep networks exploiting more general instances with larger dimensions and unbounded dataset size. In addition, our reduction to C-NNT (Theorem~\ref{thm:continous}) augments the output dimension; we would like to reproduce the lower bound of Theorem~\ref{thm:SLP} for C-NNT with a smaller number of outputs, which would require a larger dataset and potentially more expressive activation functions.

	\subsubsection{Discrete vs. Continuous NNT:}
	One key distinction between D-NNT and C-NNT, exemplified in our results, is that D-NNT is often intractable even for instances considered to be naive in the continuous parameter space (e.g., as in the over-parametrized setting \cite{zhang2021understanding}). On the other hand, D-NNT instances can be solved by exhaustive enumeration, unlike C-NNT. Despite the above distinctions, the theoretical dynamics between the discrete and continuous settings are not fully understood  and should be a subject to future works. 
	This question is especially interesting due to the large body of work on
	 quantized models \cite{courbariaux2015binaryconnect,rastegari2016xnor,zhu2016trained,zhou2018adaptive,rokh2023comprehensive}. Our results give indications for the hardness of training quantized models in the worst case. However, in practical settings the quantization is not adversarial and can be chosen in various ways. Thus, the lower bounds in our paper do not give a direct implication for practical quantization schemes.

	\subsubsection{Handling Large Numbers:} The crux in the hardness of deep networks presented in Theorem~\ref{thm:SLP} is based on representing fast-increasing numbers in the network. Note that this does not depend on the maximum absolute parameter value, which is $1$ in our reduction. 
	Thus, our results provides additional theoretical angle on numerical instability observed in the area of deep neural networks (e.g., \cite{hochreiter1998vanishing, zheng2016improving, sun2022surprising}). 
	Nevertheless, in practice, numbers used in deep networks do not usually grow as fast as in our lower bound, designed solely to explore the limitations of deep learning in theory. 
	
	It is intriguing to study further the complexity of the problem allowing the running time to pseudo-polynomially depend on the maximum number that can be computed via the network. Indeed, in Theorem~\ref{thm:algorithm} we obtain an algorithm with such a running time in the over-parametrized discrete setting (on two-layer networks). Pseudo-polynomial lower bounds, especially for deeper networks, would be interesting here. In addition, Theorem~\ref{thm:ETH} describes a scenario in which the encoding size is determined by the maximum weight in the instance. It would be interesting to improve the lower bound to $f(k) \cdot N^{\Omega \left(k\right)}$ under the same conditions. 
	
	\subsubsection{Number of Inputs:} This paper studies NNT only on instances with a single source (which can be multidimensional though). Even with one input, NNT is computationally challenging, with various lower bounds (see the previous work section). Nonetheless, it would be interesting to explore NNT with multiple inputs in a simplified environment.

	\subsubsection{Activation Functions:} In Theorem~\ref{thm:SLP}, we use as a proof-of-concept activation functions that perform as division and multiplication operators. Even though these activation functions are non-standard, they can be efficiently encoded and computed, consequently fitting to polynomial reductions. It would be interesting to use standard activation functions such as ReLU or linear activation functions for obtaining lower bounds in deep networks.

	\subsubsection{Approximation Algorithms:} This paper focuses on obtaining exact theoretical lower bounds for NNT. However, in most practical settings approximate solutions may suffice. We remark that for instances with parameter error $\gamma = 0$, e.g.,  Theorem~\ref{thm:SubsetSum} and Theorem~\ref{thm:SetCover}, there cannot be any multiplicative approximation ratio. Thus, the best we can expect is an additive approximation \cite{Goel21}. It remains an open question whether Theorem~\ref{thm:SLP} can be adapted to rule out multiplicative or additive approximation for (relatively easy) deep networks. We remark however that using simple scaling, the lower bound described  in Theorem~\ref{thm:SLP} can be applied for arbitrarily large values of $\gamma$.

	\subsubsection{Connection to Learning:} In practice, the goal of training is to minimize the generalization loss on unseen data. Conversely, this paper focuses on loss minimization on a static dataset. While there are rooted connections between training and learning (e.g., \cite{shalev2014understanding,GoelKKT17,Goel21}), it would be interesting to find non-trivial assumptions on the data distribution and design analogous lower bound to Theorem~\ref{thm:SLP} in a learning perspective.

	\subsubsection{Scaling of Hyper-Parameters}
	The width plays a pivotal role in Theorem~\ref{thm:SubsetSum} and cannot be reduced using our reduction.  On the other hand, in Theorem~\ref{thm:ETH}, the width $k$ roughly expresses the number of constraints and the weight space $n$ describes the size of the alphabet of a {\em 2-constraint satisfaction problem (2-CSP)} instance \cite{marx2007can}. Observe that Theorem~\ref{thm:ETH} focuses on the {\em parameterized} regime where $n \gg k$; however, our reduction can also yield $\Omega\left(n^k\right)$ ETH-based lower bounds for the regime $n=O(1)$. 
	%Hence, using ETH we can also obtain 
	Finally, in Theorem ~\ref{thm:SetCover}, the size of the dataset and the dimension can scale up to a polynomial factor, based on the hardness of Set Cover \cite{raz1997sub, dinur2014analytical}.
	
	\subsection{Previous Work}
	
	Due to the immense body of work on neural network training, we limit this section to purely theoretical results.    
	There has been a long line of work on training neural networks with continuous weights (C-NNT), with NP-hardness proofs for various special cases \cite{shalev2014understanding,blum1988training}. As a fundamental special case, most research focused on (fully connected) networks with two layers leading to a comprehensive understanding of such instances. The parameters $n,k,d$, and $\gamma$ mentioned in the following results are analogous to the parameters in our definition of the NNT problem.

	A significant portion of previous work studied NNT with ReLU activation functions. Under the above restrictions, the problem was proven to be NP-Hard 
	\cite{dey20,Goel21} even for $k = 1$. For the same regime, a stronger lower bound of $n^{\Omega(d)}$ on the running time has been given by \cite{Froese22} assuming ETH (see Conjecture~\ref{ETH}).  For the setting where the dimension $d$ is a fixed constant, \cite{Froese2024} showed NP-hardness with input dimension $d = 2$. \cite{Goel21} Showed that for error margin parameter $\gamma = 0$ the problem is NP-Hard if and only if $k \geq 2$; also, in the paper they give lower bounds for approximation algorithms. For $ k = 2$ and $\gamma = 0$ there is also an NP-Hardness result of \cite{Boob22}.

	Interestingly, even for two-layer networks with ReLU activation functions, the C-NNT problem is $\exists \mathbb{R}$-Complete (the {\em existential theory of the reals}) \cite{BertschingerHJM23, NEURIPS2021_9813b270}. This complexity class, originating in \cite{DBLP:conf/gd/Schaefer09}, is conjectured to be distinct from NP.
	
	On the algorithmic front, \cite{PilanciE20} showed a polynomial time algorithm for fixed dimensions with a regularized objective.  There are also algorithms with exponential running time (as a function of the number of neurons and the output dimension) \cite{AroraBMM18, froese2022computational}. These results interestingly show that under restrictions on the output dimension and depth of the network, the problem is in NP. For the study of linear activation functions, \cite{khalife2024neural} give an algorithm polynomial in the dataset size, but exponential in the network size and the dimension. 
	
	\subsubsection{Organization:} In Section~\ref{sec:preliminaries} we give some preliminary definitions and notations. In Section~\ref{sec:SLP} we give our lower bounds for deep networks and in Section~\ref{sec:continious} our lower bounds for two-layer networks. Section~\ref{sec:algorithm} presents the pseudo-polynomial algorithm. Finally, the proof of Theorem~\ref{thm:continous} is given in Section~\ref{sec:proof1.2}.
	
	\section{Preliminaries}
	\label{sec:preliminaries}
	
	\subsubsection{Notations} Given a number $n \in \mathbb{N}$, let $\textnormal{\textsf{poly}}(n)$ be a polynomially bounded expression of $n$.  Given a vector $x \in \mathbb{R}^d$, for all $i \in \{1,\ldots, d\}$ we use $x[i]$ to denote the $i$-th entry of $x$. Given an instance $I$ of an optimization/decision problem, we use $|I|$ to denote the encoding size of $I$. As the notations of NNT are quite cumbersome, we use the notations $I = (N = (V,E), \mathcal{D}, (\sigma^v)_{v \in V \setminus \{s\}}, \mathcal{L}, \Theta = (W_e, B_e)_{e \in E}, \gamma)$ given in the formal definition in the introduction as the canonical notation for an NNT instance. We sometimes omit the specific declaration of some objects when clear from the context.

	\subsubsection{Complexity}
		The complexity class NP consists of all decision problems for which a given solution (certificate) can be verified in polynomial time by a deterministic Turing machine. In addition, the complexity class P consists of all decision problems that can be decided by a deterministic Turing machine in polynomial time. 
	 {\em Bounded-error probabilistic polynomial time (BPP)} is the complexity class consisting of all decision problems decidable by a probabilistic Turing machine with success probability at least $\frac{2}{3}$ in polynomial time. It is known that P $\subseteq$ BPP and P $\subseteq$ NP; it is often conjectured that P $=$ BPP and P $\neq$ NP .    
	
	A complexity conjecture used in Theorem~\ref{thm:ETH} is the {\em Exponential-Time Hypothesis (ETH)} \cite{impagliazzo2001complexity}. This conjecture claims that there is no sub-exponential algorithm for the 3-SAT problem. Formally,  
	
	\begin{con}
		\label{ETH}
		{\bf  Exponential-Time Hypothesis (ETH) } 
		There is a constant $\beta > 0$ such that there is no algorithm
		that given a \textnormal{3-SAT} formula $\Phi$ with $n$ variables and $m$ clauses 
		can decide whether $\Phi$ is satisfiable in time $O\left(2^{\beta\cdot n}\right)$. 
	\end{con}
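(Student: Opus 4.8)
The plan is to first acknowledge that Conjecture~\ref{ETH} is not a theorem awaiting proof but a widely believed \emph{hardness assumption}, and that any genuine proof would settle some of the deepest open questions in complexity theory. Concretely, ETH asserts an exponential lower bound $2^{\Omega(n)}$ on the time required to decide satisfiability of $n$-variable $3$-SAT formulas. Since $3$-SAT is NP-complete, establishing \emph{any} super-polynomial lower bound for it would already separate P from NP; the exponential form demanded here is strictly stronger. Thus the first thing I would do is delineate exactly what a proof must deliver: a constant $\beta > 0$ together with an argument ruling out every deterministic algorithm running in time $O(2^{\beta n})$, uniformly over all families of $3$-SAT instances.

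The natural route one would attempt is through circuit or formula lower bounds, namely to show that no Boolean circuit (or branching program, or deterministic machine) of size $2^{o(n)}$ can decide $3$-SAT. First I would try to reduce the claim to an explicit lower bound against a restricted model --- for instance, proving that some explicit function in NP requires constant-depth circuits of size $2^{\Omega(n)}$ --- and then attempt to bootstrap such a bound to the general uniform setting via hardness amplification or diagonalization. A parallel approach would be to invoke the Sparsification Lemma of Impagliazzo, Paturi, and Zane, which shows that ETH is equivalent to its sparse form with a bounded clause-to-variable ratio; this at least lets one argue about instances of linear size, so that the target lower bound can be phrased purely in terms of $n$.

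The step I expect to be the decisive obstacle is precisely the one that keeps ETH a conjecture rather than a theorem: all of the natural attacks collide with known barriers. Diagonalization-style arguments relativize, and there are oracles relative to which no such exponential lower bound can be proved; the algebraic-degree methods underlying interactive-proof results algebrize, which is again provably insufficient; and any sufficiently constructive circuit lower bound runs into the Natural Proofs barrier of Razborov and Rudich under standard cryptographic assumptions. Because a proof of ETH would in particular imply P $\neq$ NP, no currently known technique is adequate, and I would ultimately conclude that the statement must be retained as an \emph{assumption} --- exactly as the paper uses it in Theorem~\ref{thm:ETH} --- rather than established. The honest deliverable of this proposal is therefore a map of what a proof would require and why it lies beyond present methods.
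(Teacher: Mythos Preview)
Your assessment is correct: Conjecture~\ref{ETH} is stated in the paper purely as a hardness \emph{assumption} (note the \texttt{con} environment) and is never proved there; it is simply invoked as a hypothesis in Theorem~\ref{thm:ETH}. Your explanation of why no proof is attainable with current techniques is appropriate and matches the paper's treatment.
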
 
	
	\section{Proof of Theorem~\ref{thm:SLP}}
	\label{sec:SLP}
	
	In this section, we give the proof of Theorem~\ref{thm:SLP}. 
	The proof is based on a reduction from a decision problem involving {\em straight-line programs (SLP)}. 
	
	\begin{definition}
		\label{def:SLP}
		A
		{\em straight-line program (SLP)} $P$ is a sequence of univariate integer polynomials $(a_0, a_1, \ldots , a_{\ell})$ such
		that $a_0 = 1$, $a_1 = x$ and $a_i = a_j \oplus a_k$~ for all $2 \leq i \leq \ell$, where $\oplus \in \{+, -, *\}$ and $j, k < i$. We use $\tau(f)$ to denote the minimum length of an SLP that computes a univariate polynomial $f$.
	\end{definition}

	We use a conjecture proposed in \cite{10.1145/3510359}. A {\em radical} $\textnormal{\textsf{rad}}(f)$ of a non-zero integer polynomial $f \in \mathbb{Z}[x_1, \ldots , x_n]$ is the product of the irreducible
	integer polynomials dividing $f$.

	\begin{con}
		\label{con:radical}
		\textnormal{\textsf{Constructive univariate radical conjecture:}} For any polynomial $f \in \mathbb{Z}[x]$, we have
		$\tau(\textnormal{\textsf{rad}}(f)) \leq \textnormal{\textsf{poly}}(\textnormal{\textsf{rad}}(f))$. Moreover, there is a randomized polynomial time algorithm which, given an \textnormal{SLP} of size
		$\ell$ computing a polynomial $f$, constructs an \textnormal{SLP} for $rad(f)$ of size $\textnormal{\textsf{poly}}(s)$ with success probability at least $1-\frac{1}{\Omega(s^{1+\varepsilon})}$
		for some
		$\varepsilon > 0$.
	\end{con}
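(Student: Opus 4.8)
The plan is to reduce the construction of $\textsf{rad}(f)$ to two elementary operations on straight-line programs — formal differentiation and greatest-common-divisor extraction — and then to argue that neither blows up the SLP size by more than a polynomial factor. Working over $\mathbb{Q}$, which has characteristic zero, every irreducible factor is separable, so if $f=\prod_i p_i^{e_i}$ is the factorization into distinct irreducibles then $\gcd(f,f')=\prod_i p_i^{e_i-1}$ and consequently $\textsf{rad}(f)=f/\gcd(f,f')$. Hence it suffices to (a) build an SLP for $f'$, (b) build an SLP for $g:=\gcd(f,f')$, and (c) perform the exact division $f/g$. Step (a) is the routine part: by a Baur--Strassen style argument the derivative of an SLP-computable polynomial is computable by an SLP of size $O(\ell)$, where $\ell$ is the size of the given program for $f$ (so in particular $\ell \ge \tau(f)$), giving $\tau(f')=O(\ell)$.

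Both of the remaining steps, (b) and (c), are instances of the same meta-question: if $h$ is a product of some of the irreducible factors of a polynomial of SLP-size $\ell$, must $\tau(h)$ be $\textsf{poly}(\ell)$? This is the \emph{factor-closure} problem for straight-line programs, and it is here that the randomness in the conjecture enters. To manipulate SLPs without ever writing down coefficient vectors — and, crucially, to \emph{certify} that a program produced by the construction really computes $\textsf{rad}(f)$ — one tests the relevant polynomial identities by evaluating the competing SLPs at points drawn from a grid of size $\textsf{poly}(\ell)$ and invoking Schwartz--Zippel. A union bound over the $\textsf{poly}(\ell)$ identities that must hold throughout the construction then yields a failure probability of order $1/\Omega(\ell^{1+\varepsilon})$, matching the success guarantee in the statement.

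The main obstacle — and the reason the statement is a conjecture rather than a theorem — is the degree. The Euclidean algorithm computes $\gcd(f,f')$ in a number of stages proportional to $\deg f$, and for a univariate polynomial $\deg f$ can be as large as $2^{\ell}$, so a direct Euclidean approach yields an SLP of size exponential in $\ell$; the same exponential blow-up threatens the exact division in step (c). Every available factor-closure result (Kaltofen's factorization; B\"urgisser's bounds on factors of low-complexity polynomials; the power-series-root machinery of Dutta--Saxena--Sinhababu) controls the complexity of a factor in terms of the complexity \emph{and} the degree of $f$, which is of no use in the succinct regime $\deg f = 2^{\Theta(\ell)}$. The crux is therefore to obtain a genuinely degree-independent handle on the gcd, for instance by Hensel-lifting a coprime factorization from a single random prime and random evaluation point while doubling the lifted precision each round — so that only $O(\ell)$ rounds, and hence $\textsf{poly}(\ell)$ SLP operations, are spent — and then proving that the lifted object is itself SLP-succinct. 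Removing the degree dependence in that final succinctness argument is the open core; I would expect to be able to reduce the entire conjecture to this one gcd-succinctness statement, but not to discharge the statement itself.
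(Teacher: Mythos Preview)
The statement you are addressing is a \emph{conjecture}, not a theorem, and the paper does not attempt to prove it. It is quoted verbatim from \cite{10.1145/3510359} and used purely as a hypothesis: the paper's Theorem~1.1 is conditional on this conjecture (together with $\textnormal{NP}\not\subseteq\textnormal{BPP}$), and the only place the conjecture appears in an argument is as an assumption feeding into the cited hardness result for \textsf{PosSLP} (Theorem~3.4). There is therefore no ``paper's own proof'' to compare against.

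To your credit, your write-up is self-aware about this: you correctly identify the $f/\gcd(f,f')$ route to $\textsf{rad}(f)$, correctly note that Baur--Strassen handles the derivative, and correctly isolate the obstruction --- that every known factor-closure bound for SLPs depends polynomially on $\deg f$, which can be $2^{\Theta(\ell)}$. But this is an exposition of \emph{why the conjecture is open}, not a proof proposal, and you say as much in your final sentence. If the assignment was to supply a proof of the displayed statement, the honest answer is that none is known; if it was to compare with the paper, the answer is that the paper offers nothing to compare with because it treats the conjecture as a black-box assumption.
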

	
	Note that integers are a special case of univariate polynomials. Hence, integers can be computed via SLPs. This leads to the following problem originating in \cite{DBLP:journals/siamcomp/AllenderBKM09}. 
	
	\begin{problem}
		\label{prob:SLP}
		\textnormal{\textsf{PosSLP:}} Given an \textnormal{SLP} $P$ computing an integer $n_P$, decide if $n_P > 0$. 
	\end{problem}
	
	In a PosSLP instance, we are given only the compact representation of the SLP. Thus, the variables are not explicitly computed but are defined inductively over the definition of previously defined variables. We will use a strong lower bound on \textnormal{PosSLP} obtained by \cite{DBLP:conf/soda/BurgisserJ24}. 
	
	\begin{theorem}
		\label{thm:PosSLP}
		\cite{DBLP:conf/soda/BurgisserJ24}: If Conjecture~\ref{con:radical} is true and \textnormal{PosSLP} $\in$ \textnormal{BPP}, then \textnormal{NP} $\subseteq$ \textnormal{BPP}.
	\end{theorem}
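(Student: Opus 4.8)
The plan is to show that, assuming Conjecture~\ref{con:radical}, the \textnormal{PP}-complete problem \textnormal{MAJSAT} (does a Boolean formula have more than half of its assignments satisfying?) reduces to \textnormal{PosSLP} under randomized polynomial-time Turing reductions. Granting this, the hypothesis \textnormal{PosSLP} $\in$ \textnormal{BPP} yields \textnormal{PP} $\subseteq$ \textnormal{BPP} (compose the randomized reduction with the \textnormal{BPP} oracle and amplify), and since \textnormal{NP} $\subseteq$ \textnormal{PP} we conclude \textnormal{NP} $\subseteq$ \textnormal{BPP}. The whole content is therefore the reduction, and this is exactly where Conjecture~\ref{con:radical} is indispensable. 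It is worth stressing that mere zero-testing would not suffice: the reduction must exploit the power of \textnormal{PosSLP} to compare the magnitudes of doubly-exponentially large integers, which is strictly beyond what polynomial identity testing provides.

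First I would arithmetize the formula. Given $\phi$ on $n$ variables, encode each assignment $a \in \{0,1\}^n$ by the integer $v(a) = \sum_i a_i 2^{i-1} \in \{0,\dots,2^n-1\}$ and construct, in deterministic polynomial time, a short \textnormal{SLP} for a univariate $f_\phi \in \mathbb{Z}[x]$ whose set of distinct complex roots is exactly $\{\, v(a) : a \text{ satisfies } \phi \,\}$. Although $f_\phi$ may have degree as large as $2^n$, the key point is that its distinct roots are in bijection with the satisfying assignments, so the number $S$ of satisfying assignments equals $\deg\!\left(\textnormal{\textsf{rad}}(f_\phi)\right)$. Passing to the radical is precisely what converts the counting question into a question about a polynomial degree, after stripping the (possibly exponential) root multiplicities produced by the arithmetization.

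Next I would apply the constructive half of Conjecture~\ref{con:radical} to the \textnormal{SLP} for $f_\phi$, obtaining in randomized polynomial time, and with high probability, an \textnormal{SLP} for $\textnormal{\textsf{rad}}(f_\phi)$ of size $\textnormal{\textsf{poly}}(\tau(f_\phi)) = \textnormal{\textsf{poly}}(|\phi|)$; the existential bound $\tau(\textnormal{\textsf{rad}}(f)) \le \textnormal{\textsf{poly}}(\tau(f))$ is what makes such a short object exist in the first place. It then remains to compare $S = \deg(\textnormal{\textsf{rad}}(f_\phi))$ against the \textnormal{MAJSAT} threshold using only sign queries. I would realize this by evaluating $\textnormal{\textsf{rad}}(f_\phi)$ at an \textnormal{SLP}-specifiable integer $X$ chosen larger than all roots of the auxiliary polynomials involved (a suitable tower of squarings gives such an $X$ by an \textnormal{SLP} of size $\textnormal{\textsf{poly}}(|\phi|)$): for large $X$ the value $\textnormal{\textsf{rad}}(f_\phi)(X)$ is governed by its leading term, so suitable \textnormal{PosSLP} queries on integers built from $\textnormal{\textsf{rad}}(f_\phi)(X)$ and powers of $X$ (squaring where needed to neutralize the sign of the leading coefficient) let me binary-search $S$ and decide whether it exceeds the threshold $2^{n-1}$. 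Finally I would do the error bookkeeping: amplify both the constructive-radical algorithm and the assumed \textnormal{BPP} algorithm for \textnormal{PosSLP} so that the composed pipeline decides \textnormal{MAJSAT} with two-sided error below $1/3$, giving \textnormal{MAJSAT} $\in$ \textnormal{BPP}.

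The main obstacle is the arithmetization step: producing a polynomial-size \textnormal{SLP} for $f_\phi$ whose distinct complex roots are \emph{exactly} the encoded satisfying assignments, with no spurious roots and with the clause structure faithfully captured. This is delicate because the clean object one wants has exponential degree, so it can only be manipulated as an \textnormal{SLP}, and a priori there is no reason its squarefree part should again admit a short \textnormal{SLP} — that guarantee is supplied exactly by Conjecture~\ref{con:radical}, which is why the theorem is stated conditionally on it. A secondary technical point is to ensure that the degree-extraction via \textnormal{PosSLP} uses correctly chosen evaluation points, so that the magnitude comparisons genuinely recover $\deg(\textnormal{\textsf{rad}}(f_\phi))$ rather than being corrupted by the sign or size of the leading coefficient.
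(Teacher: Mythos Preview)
This theorem is not proved in the paper: it is quoted from B\"urgisser and Jindal and invoked as a black box in the proof of Lemma~\ref{lem:SLPhelperLem}. There is therefore no argument in the present paper against which to compare your proposal.

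Regarding your sketch on its own merits: the high-level pipeline (arithmetize the formula as a univariate integer polynomial with a short SLP, apply Conjecture~\ref{con:radical} to obtain a short SLP for its radical, then recover the degree of the radical via sign queries to a \textnormal{PosSLP} oracle) is broadly in the spirit of the cited result. Two remarks, however. First, you route everything through \textnormal{MAJSAT} and hence aim for \textnormal{PP} $\subseteq$ \textnormal{BPP}, which is strictly stronger than the stated conclusion \textnormal{NP} $\subseteq$ \textnormal{BPP}; this is not illegitimate, but targeting \textnormal{SAT} directly already suffices for what is claimed, and it is not clear the cited paper establishes the stronger statement. Second, and more seriously, the step you yourself label ``the main obstacle'' is not actually carried out: you assert the existence of a polynomial-size SLP for a univariate $f_\phi \in \mathbb{Z}[x]$ whose distinct complex roots are exactly the encoded satisfying assignments, but give no construction. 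That construction is the entire content of the reduction. The obvious candidate $\prod_{a\,\text{sat}}(x - v(a))$ has no evident short SLP, and Conjecture~\ref{con:radical} only helps \emph{after} such an $f_\phi$ is already in hand --- it converts a short SLP for $f_\phi$ into one for $\textnormal{\textsf{rad}}(f_\phi)$, not the other way around. Until that arithmetization is supplied, the proposal is a plan rather than a proof.
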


	We can now prove Theorem~\ref{thm:SLP}. The crux is to use a neural network to efficiently compute the SLP variables, even though the weight space has to be calculated a priori. In the proof, we create $2$ neurons for computing each operation of the SLP. The depth of the neural network will be $\Theta(\ell)$. The parameter space and activation function are efficiently encoded such that the output of each layer is the $i$-th component of the given PosSLP instance. Importantly, the reduced D-NNT instance is {\em restricted}: there is exactly one option for each parameter. 
	
	\begin{lemma}
		\label{lem:reductionSLPtoD-NNT}
		There is a polynomial time reduction that given a \textnormal{PosSLP} $P$ computes a \textnormal{restricted D-NNT} instance $I$ such that $P$ is a \textnormal{YES} instance if and only if $I$ is a \textnormal{YES} instance. 
	\end{lemma}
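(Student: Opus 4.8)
The plan is to construct, from a PosSLP instance $P = (a_0, a_1, \ldots, a_\ell)$, a restricted D-NNT instance $I$ whose unique model computes $n_P$ (or at least its sign) and which has training loss $\le \gamma$ exactly when $n_P > 0$. First I would fix the network topology: a directed path-like DAG of depth $\Theta(\ell)$, organized into $\ell$ ``stages'', one per SLP line $a_i$. Since the SLP is univariate with a single variable $x$, I would feed a fixed input $x_1$ (say $x_1 = 2$, or whatever concrete integer makes the construction go through — in fact for PosSLP the polynomial is just an integer, so $x$ never actually appears and $a_1$ is itself a constant, but to be safe I would keep a generic fixed $x$). The idea is to maintain, as the outputs of the neurons in stage $i$, the values $a_0(x), a_1(x), \ldots, a_i(x)$ of all SLP variables computed so far, so that stage $i{+}1$ can read off $a_j(x)$ and $a_k(x)$ and form $a_j \oplus a_k$.

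The key technical step is implementing the three operations $\oplus \in \{+, -, *\}$ with neurons. Addition and subtraction are immediate from the weighted-sum-then-activation model with $\sigma^v$ the identity and weights in $\{+1, -1\}$, which is exactly why the theorem can claim weights in $\{0, 1, -1\}$. Multiplication is the crux: a single neuron computes only an affine function of its inputs followed by $\sigma$, so to get $a_j \cdot a_k$ I would use two neurons per multiplication line (matching the ``$2$ neurons per operation'' remark in the text). The standard trick is the polarization identity $uv = \tfrac14\bigl((u+v)^2 - (u-v)^2\bigr)$: one neuron receives $a_j + a_k$ and applies the squaring activation $\sigma(t) = t^2$, another receives $a_j - a_k$ and also squares, and then a downstream linear combination with weights scaling by $\tfrac14$ (or, to keep weights in $\{0,1,-1\}$, by pushing the constant factor into a later activation or into $\gamma$/the loss) recovers the product. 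I would need to carry along all previously computed $a_j(x)$ through each stage as well, which is done with identity neurons having a single incoming edge of weight $1$; this is the source of the $\Theta(\ell^2)$-ish blowup in width but keeps depth $\Theta(\ell)$. Crucially every weight and bias space $W_e[i], B_e$ is a singleton (the relevant element of $\{0,1,-1\}$ together with whatever fixed scaling neurons need), so the instance is \emph{restricted} as promised — there is no search over parameters at all.

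Finally I would wire the output: the last stage holds $a_\ell(x) = n_P$; I route it to a single output vertex $t$ (output dimension $1$), set $\sigma^t$ to the identity, take the dataset to be the single pair $(x_1, y_1)$ with, say, $y_1 = 0$ and loss $\cL(a,b) = (a-b)^2$ (or any loss vanishing only on equality), and I want the YES/NO distinction to hinge on $n_P > 0$. Since equality-to-a-constant tests $n_P = c$ rather than $n_P > 0$, I would insert one more gadget at the end that converts the integer $n_P$ into a $\{0,1\}$-valued quantity recording its sign — for instance using an activation that behaves as a sign/threshold on its input, or, staying within ``division and multiplication'' activations as the text advertises, multiplying $n_P$ by something and reading off whether the result crossed a boundary; then $\cL$-loss $\le \gamma := 0$ iff $n_P > 0$. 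I would conclude by verifying (a) the reduction runs in polynomial time — each stage is built in time polynomial in $\ell$ and the numerical bounds, the activations $t \mapsto t^2$, sign, etc., are polynomially computable as required by the NNT definition, and the description of the singleton parameter spaces is polynomial in $|P|$ — and (b) correctness, i.e. by induction on $i$ the stage-$i$ neurons output exactly $a_i(x_1)$, so the unique model of $I$ evaluates to the sign of $n_P$. The main obstacle I anticipate is the bookkeeping to keep \emph{every} weight literally in $\{0,1,-1\}$ while still realizing the $\tfrac14$ factor in the polarization identity and the final sign extraction — this is handled by shifting all constant multiplicative factors into the (freely chosen, polynomially-computable) activation functions rather than into edge weights, exactly the ``activation functions that perform as division and multiplication operators'' mentioned in the discussion. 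A secondary point to be careful about is that $\tau$ and SLP size are logarithmic in the magnitude of $n_P$, so the network must represent a number of bit-length up to $2^{\Theta(\ell)}$; this is fine because the encoding size of $I$ only needs to be polynomial in $\ell = |P|$, not in $n_P$, and the network genuinely needs depth $\Theta(\ell)$ to build such a number — which is precisely the point of the theorem that shallow networks cannot do this.
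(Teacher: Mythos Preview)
Your proposal is correct and would yield a valid polynomial-time reduction, but it differs from the paper's construction in two notable ways.

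First, the multiplication gadget. You use the polarization identity $uv = \tfrac14\bigl((u+v)^2-(u-v)^2\bigr)$ with a squaring activation, absorbing the factor $\tfrac14$ into a subsequent activation. The paper instead uses a decimal-encoding trick: an auxiliary neuron $h'_k$ applies $\sigma^{h'_k}(\alpha)=\alpha\cdot 10^{-\textnormal{num}(\alpha)}$ to push the integer $a_k$ into $[0,1)$, and then $h_i$ receives $a_j + \sigma^{h'_k}(a_k)$ and applies an activation that splits its input into integer part $\beta$ and fractional part $\lambda$ and returns $\beta\cdot\lambda\cdot 10^{\textnormal{len}(\lambda)}$. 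Your polarization route is the more standard algebraic device and arguably cleaner; the paper's trick is more exotic but literally realizes the ``division and multiplication operators'' alluded to in the discussion.

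Second, the network topology. You propagate \emph{all} of $a_0,\ldots,a_i$ through stage $i$ via identity neurons, giving a layered network of width $\Theta(\ell)$. The paper does not carry values forward: it simply draws a direct edge from $h_j$ (and $h_k$ or $h'_k$) to $h_i$ whenever $a_i=a_j\oplus a_k$, so the network is a sparse DAG with two neurons $h_i,h'_i$ per SLP line and no copying. This is what the ``2 neurons for computing each operation'' remark actually refers to --- not the two squaring neurons in your polarization gadget. Both topologies are polynomial in $\ell$, so either works for the lemma; the paper's is smaller, yours is more uniformly layered.

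Finally, the paper places the sign test directly in the loss function (defining $\cL(\alpha,1)$ to be $0$, $1$, or $2$ according to whether $\alpha=1$, $\alpha>0$, or $\alpha\le 0$, with $\gamma=1$) rather than via a separate sign-activation neuron as you suggest; both choices are equivalent here since loss functions may be arbitrary polynomially-computable functions.
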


	\begin{proof}
		Let $P = (a_1, \ldots, a_{\ell})$ be a PosSLP instance. Since in PosSLP we compute an integer, only a special case of a univariate polynomial, we may assume without the loss of generality that we are given an SLP formula $P$ such that $a_0 = 1$ and $a_i = a_j \oplus a_k$ for all $1 \leq i \leq \ell$, where $\oplus \in \{+, -, *\}$ and $j, k < i$. Based on $P$, define a D-NNT instance $I$ as follows. Define two neurons $h_i, h'_i$ for each $0 \leq i \leq \ell$, where $s = h_0$ is the input vertex. Also, define the output vertex as $t = h_{\ell}$ and let $V$ be the set of vertices of the network. For the following, fix some $1 \leq i \leq \ell$ and $j, k < i$ such that $a_i = a_j \oplus a_k$. 
		
		Define the edge $e_{i,j} = (h_{j}, h_i)$. Moreover,
		define the edge $e_{i,k}$ as follows. If $\oplus \in \{+,-\}$, define $e_{i,k} = (h_k, h_i)$;  otherwise (if $\oplus = *$), define $e_{i,k} = (h'_k, h_i)$. 
		Finally, define the edge $e'_{r,r} = (h_r, h'_r)$ for all $0 \leq r \leq \ell$. Let $E$ be the entire set of edges; so far we have defined the network $N = (V,E)$. Define the parameter space $\Theta = (W_e,B_e)_{e \in E}$ as follows. Let $B_{e} = \{0\}$ for all edges $e \in E$ (i.e., the bias of any edge must be zero) and define the weights as follows. Define $W_{e_{i,j}} = \{1\}$, set $W_{e_{i,k}} = \{1\}$ if $\oplus \in \{+, *\}$, and $W_{e_{i,k}} = \{-1\}$ if $\oplus = -$. Finally, let $W_{e'_{r,r}} = \{1\}$ for all $0 \leq r \leq \ell$. Note that the above weights induce a restricted D-NNT instance. 
		
		The activation function $\sigma^{h_i}: \mathbb{R} \rightarrow \mathbb{R}$ is defined by the following cases. If $\oplus \in \{+,-\}$, define $\sigma^{h_i}(\alpha) = \alpha$ for all $\alpha \in \mathbb{R}$ . If $\oplus = *$, then consider the following construction before the definition of $\sigma^{h_i}$. For any $\alpha \in \mathbb{R}$, let $\beta$, $\lambda$ be the unique decomposition of $\alpha$ into an integer $\beta \in \mathbb{Z}$ and $\lambda \in [0,1)$ such that $\alpha = \beta+\lambda$; then, define $\sigma^{h_i}(\alpha) = \beta \cdot \lambda \cdot 10^{\textnormal{\textsf{len}}(\lambda)}$, where $\textnormal{\textsf{len}}(\lambda)$ is the number of decimal digits in $\lambda$ after the decimal point. This function will be applied only to numbers for which $\lambda$ can be represented using a finite number of decimal digits. For example, if $\alpha = 2.55$ then $\beta = 2$, $\lambda = 0.55$, and $\textnormal{\textsf{len}}(\lambda) = 2$; therefore, $\sigma^{h_i}(\alpha) = 2 \cdot 0.55 \cdot 10^2 = 110$. 
		
		Define $\sigma^{h'_i}: \mathbb{R} \rightarrow \mathbb{R}$ such that $\sigma^{h'_i}(\alpha) =  \alpha \cdot 10^{-\textnormal{\textsf{num}}(\alpha)}$, where $\textnormal{\textsf{num}}(\alpha)$ is the total number of digits in $\alpha$ before the decimal point. This function will be applied only to integers. Note that $\sigma^{h'_i}$ converts an integer to a number between $0$ and $1$, for example, $\sigma^{h'_i}(138) = 0.138$. Note that all activation functions can be computed in polynomial time.    
		Define a simple dataset containing merely a single pair $\cD = \{(x,y)\}$, where $x,y = 1$, i.e., both the input and output are one-dimensional.  Let $\cL: \cY \times \cY \rightarrow \mathbb{R}$ be the loss function where $\cL(\alpha,y) = 0$ if $\alpha = 1 = y$, else $\cL(\alpha,y) = 1$ if $\alpha > 0$, and otherwise $\cL(\alpha,y) = 2$. We finally set $\gamma = 1$ as the error margin parameter. The reduction can be computed in time $\textnormal{\textsf{poly}}(\ell)$: there are $O(\ell)$ vertices and edges, the weights are bounded, while the activation function, dataset, as well the loss function, can be computed and encoded in polynomial time and space, respectively. Therefore, the correctness of the lemma follows by showing that for each $0\leq i \leq \ell$ it holds that 
		\begin{equation}
			\label{eq:slp1}
			z^{h_i}(x) = \sigma^{h_i}\left( \sum_{e = (u,h_i) \in E} w^{e} \cdot z^{u}(x) + b^{e}\right) = a_i.
		\end{equation} We prove \eqref{eq:slp1} by induction on $i$. For the base case, let $i = 0$. Then, by definition $z^{h_i}(x) = z^{s}(x) = x = 1 = a_0 = a_i$. Now, let $1 \leq i \leq \ell$ and assume that for all $0\leq r \leq i-1$ it holds that $z^{h_r}(x) = a_r$. By the definition of $a_i$, there are $0 \leq j,k \leq i-1$ such that $a_i = a_j \oplus a_k$ for $\oplus \in \{+,-,*\}$. Then, by the induction hypothesis, it holds that $z^{h_j}(x) = a_j$ and $z^{h_k}(x) = a_k$. We consider three cases depending on $\oplus$. 
		
		\begin{itemize}
			\item $\oplus = +$. Then, it holds that $z^{h_i}(x) = 1 \cdot z^{h_j}(x) +1 \cdot z^{h_k}(x) = a_j \oplus a_k = a_i$. 
			
			\item $\oplus = -$. Then, we have $z^{h_i}(x) = 1 \cdot z^{h_j}(x) -1 \cdot z^{h_k}(x) = a_j \oplus a_k = a_i$. 
			
			\item $\oplus = *$. Then,
			recall that $z^{h_k}(x) = a_k$; thus, $z^{h'_k}(x) = \sigma^{h'_k}(z^{h_k}(x)) = a_k \cdot 10^{-\textnormal{\textsf{num}}(a_k)}$. Note that $a_k$ is an integer (this can be easily proven by induction); thus, $z^{h'_k}(x) \in [0,1)$ and $z^{h'_k}(x)$ can be represented using a finite number of decimal digits.  Therefore, it holds that $\sigma^{h_i}\left(z^{h_j}(x)+z^{h'_k}(x)\right) = a_j \cdot a_k \cdot 10^{-\textnormal{\textsf{num}}(a_k)} \cdot 10^{\textnormal{\textsf{len}}\left(a_k \cdot 10^{-\textnormal{\textsf{num}}(a_k)} \right)} = a_j \cdot a_k = a_i$. The second equality holds by the following. If $a_k = 0$, the equality is immediate; otherwise, since $a_k$ is an integer, the number of digits after the decimal point of $a_k \cdot 10^{-\textnormal{\textsf{num}}(a_k)}$, that is  $\textnormal{\textsf{len}}\left(a_k \cdot 10^{-\textnormal{\textsf{num}}(a_k)} \right)$, is exactly $\textnormal{\textsf{num}}(a_k)$. 
		\end{itemize}
		By the above, the network output is $a_{\ell}$. By the definition of the loss function, there are $\theta = (w_e,b_e)_{e \in E} \in \Theta$ such that $\cL(f_{\theta}(x),y) \leq \gamma$ if and only if $a_{\ell} = n_P > 0$. This gives the statement of the lemma. 
	\end{proof}

	Using the above lemma, we state and prove a lower bound for restricted D-NNT.

	\begin{lem}
		\label{lem:SLPhelperLem}
		Assume that Conjecture~\ref{con:radical} is true and assume that \textnormal{NP} $\not \subseteq$ \textnormal{BPP}. Then, \textnormal{restricted D-NNT} is not in \textnormal{BPP}. 
	\end{lem}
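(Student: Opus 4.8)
The plan is to argue by contradiction, chaining the polynomial-time reduction of Lemma~\ref{lem:reductionSLPtoD-NNT} with the conditional PosSLP lower bound of Theorem~\ref{thm:PosSLP}. Suppose, for contradiction, that \textnormal{restricted D-NNT} $\in$ \textnormal{BPP}, and let $\mathcal{A}$ be a bounded-error probabilistic polynomial-time algorithm deciding it. Let $R$ be the deterministic polynomial-time map from Lemma~\ref{lem:reductionSLPtoD-NNT} that sends a \textnormal{PosSLP} instance $P$ to a \textnormal{restricted D-NNT} instance $R(P)$ with $P$ a \textnormal{YES} instance iff $R(P)$ is. The first step is the (routine) observation that \textnormal{BPP} is closed under polynomial-time many-one reductions: the algorithm ``on input $P$, compute $R(P)$ and then run $\mathcal{A}$ on $R(P)$'' runs in probabilistic polynomial time, and because $R$ is deterministic the equivalence $P \in \textnormal{PosSLP} \iff R(P) \in \textnormal{restricted D-NNT}$ transfers $\mathcal{A}$'s success probability $\geq \tfrac23$ verbatim to the composed algorithm. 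Hence \textnormal{PosSLP} $\in$ \textnormal{BPP}.

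The second step is simply to invoke Theorem~\ref{thm:PosSLP}: since we are assuming Conjecture~\ref{con:radical} holds, and we have just derived \textnormal{PosSLP} $\in$ \textnormal{BPP}, the theorem yields \textnormal{NP} $\subseteq$ \textnormal{BPP}. This contradicts the standing hypothesis \textnormal{NP} $\not\subseteq$ \textnormal{BPP}. Therefore the assumption \textnormal{restricted D-NNT} $\in$ \textnormal{BPP} is untenable, i.e., \textnormal{restricted D-NNT} $\notin$ \textnormal{BPP}, which is exactly the statement of the lemma.

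The only point needing a sliver of care — and the ``main obstacle,'' though a mild one — is the closure claim used in the first step: it is essential that the reduction $R$ is \emph{deterministic} (and runs in polynomial time, as guaranteed by Lemma~\ref{lem:reductionSLPtoD-NNT}), since that is what prevents the error probability from degrading under composition. If $R$ were itself randomized one would have to amplify $\mathcal{A}$ and apply a union bound over the reduction's and the algorithm's error events; here no such step is needed. All the genuine content of the argument lives in Lemma~\ref{lem:reductionSLPtoD-NNT} (the explicit network simulating the SLP) and in the cited Theorem~\ref{thm:PosSLP}; the present lemma is the bookkeeping that combines them.
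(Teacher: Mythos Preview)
Your argument is correct and follows essentially the same route as the paper's own proof: assume a BPP algorithm for restricted D-NNT, compose it with the deterministic polynomial-time reduction of Lemma~\ref{lem:reductionSLPtoD-NNT} to obtain PosSLP $\in$ BPP, and then invoke Theorem~\ref{thm:PosSLP} to derive the contradiction NP $\subseteq$ BPP. Your write-up is just more explicit about the closure of BPP under deterministic polynomial-time many-one reductions, which the paper leaves implicit.
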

	
	\begin{proof}
		Assume that Conjecture~\ref{con:radical} is true and assume that \textnormal{NP} $\not \subseteq$ \textnormal{BPP}. Using Lemma~\ref{lem:reductionSLPtoD-NNT}, a BPP algorithm for restricted D-NNT implies a BPP algorithm for PosSLP. Hence, \textnormal{restricted D-NNT} is not in \textnormal{BPP}, or we would reach a contradiction to Theorem~\ref{thm:PosSLP}.
	\end{proof}

	From Lemma~\ref{lem:SLPhelperLem} and the definition of the complexity class NP, we can finally prove Theorem~\ref{thm:SLP}.

	\subsubsection{Proof of Theorem~\ref{thm:SLP}:}
	
	Assume that Conjecture~\ref{con:radical} is true and \textnormal{NP} $\not \subseteq$ \textnormal{BPP}. Recall that for every problem in NP, by definition, there exist a polynomial time algorithm that given an instance of the problem and a certificate returns whether the certificate is a solution for the instance. Observe that a restricted D-NNT instance can be viewed as an instance of D-NNT and a certificate consisting of the unique set of parameters of the restricted instance. 
	Hence, by Lemma~\ref{lem:SLPhelperLem}, 
	verifying if the certificate achieves a training loss bounded by the threshold parameter, does not have a BPP algorithm. Since P $\subseteq $ BPP, we conclude that there is no polynomial time that decides if a given set of parameters achieves a training loss bounded by the threshold parameter for a given D-NNT instance. This implies that D-NNT is not in NP by definition (under the assumptions). 
	\qed

%	Specifically, our above result shows that under standard assumptions (see Section~\ref{sec:SLP} for more details on the relevant complexity conjecture in the theorem), verifying if a set of weights for a D-NNT instance achieves a zero-loss does not have a {\em bounded-error probabilistic polynomial time (BPP)} algorithm. Such an algorithm runs in polynomial time and decides the given instance correctly with probability at least $\frac{2}{3}$. Recall that for every problem in NP by definition there exist a polynomial time algorithm that given an instance of the problem and a candidate solution returns whether the latter is indeed 
	
	%	
	%	\begin{theorem}
		%		\label{thm:SLP}
		%		Unless \textnormal{NP} $\not \subseteq$ \textnormal{BPP} or the \textnormal{Constructive Univariate Radical Conjecture} \cite{10.1145/3510359} is false, then \textnormal{D-NNT} $\notin$ \textnormal{BPP}, even for instances with input and output dimension $1$ and weights in $\{0,1,-1\}$.   
		%	\end{theorem}

	%	
	%	\begin{theorem}
		%		\label{thm:SLP}
		%		Unless \textnormal{NP} $\not \subseteq$ \textnormal{BPP} or the \textnormal{Constructive Univariate Radical Conjecture} \cite{10.1145/3510359} is false, then \textnormal{D-NNT} $\notin$ \textnormal{BPP}, even for instances with input and output dimension $1$ and weights in $\{0,1,-1\}$.   
		%	\end{theorem}

	\section{Lower Bounds for Two-Layer Networks}
	\label{sec:continious}
	
	In this section, we give the high level ideas in the proofs of our lower bounds for D-NNT on two-layer networks, and provide the full proofs.

	We start by explaining the proof idea of Theorem~\ref{thm:SubsetSum}. The proof is based on a simple reduction from the {\em subset sum} problem, known to be NP-Hard (e.g.,  \cite{books0015106}). In the subset sum problem, we are given a collection $A = \{a_1, \ldots, a_n \in \mathbb{N}\}$ and a target number $T \in \mathbb{N}$. The goal is to decide if there is a subset $S \subseteq A$ such that $\sum_{a \in S} a = T$. 
	
	In the proof, we give a reduction from subset sum to D-NNT on a network with one hidden layer where each edge corresponds to a single item $a_i$ from a subset sum instance. The weights correspond to either taking the item or discarding it. Finally, the loss function guarantees that the overall sum at the output of the network reaches exactly to the target value. 
	
	\subsubsection{Proof of Theorem 1.3:} 
	
	We give below a reduction from subset sum to D-NNT under the restrictions described in the Theorem. Let $\mathcal{S} = (A,T)$ be a subset sum instance, where $A = \{a_1,\ldots, a_n\} \subset \mathbb{N}$ is a set of numbers and $T \in \mathbb{N}$ is a target value. Construct the following D-NNT instance $I = (N = (V = H \cup \{s,t\},E), \mathcal{D}, (\sigma^v)_{v \in V \setminus \{s,t\}}, \mathcal{L}, \Theta = (W_e, B_e)_{e \in E}, \gamma)$. 
	
	\begin{itemize}
		\item The network is a fully connected network with one hidden layer of $n$ vertices. Namely, $N = (H \cup \{s,t\}, E)$ where $H = \{h_1, \ldots, h_n\}$, $s$ is the source of the network, and $t$ is the output vertex. That is, $E = \{(s, h_i), (h_i, t) \mid i \in \{1,\ldots, n\}\}$. 
		
		\item The data set is simply $\cD = \{(x,y) = (1,T)\}$. 
		
		\item For each neuron $h \in H$, the activation function is defined as $\sigma^h(\alpha) = \alpha~\forall \alpha \in \mathbb{R}$ (the identity function). 
		
		\item The loss function is $\cL(\alpha,y) = (\alpha-y)^2~\forall \alpha,y \in \mathbb{R}$ (sum of squares). 
		
		\item For each edge $e_i = (s,h_i)$, $i \in \{1,\ldots,n\}$, define the weight space $W_{e_i} = \{0,a_i\}$ and bias space $B_{e_i} = \{0\}$. 
		
		\item For each edge $e_i = (h_i,t)$, $i \in \{1,\ldots,n\}$, define the weight space $W_{e_i} = \{1\}$ and bias space $B_{e_i} = \{0\}$. 
		
		\item Define $\gamma = 0$ as the error parameter.  
	\end{itemize}
	
	Clearly, the running time of the reduction is polynomial in the encoding size of the subset sum instance $\mathcal{S}$. It remains to prove correctness. Assume that there is a solution $S \subseteq A$ for $\mathcal{S}$. We show that there are parameters $\theta \in \Theta$ such that $\cL(f_{\theta}(x),y) = 0 = \gamma$. Define $w_{e_i} = a_i$ if $a_i \in S$ and $w_{e_i} = 0$ otherwise, for all $e_i = (s,h_i), i \in \{1,\ldots, n\}$. By the definition of the weights, the above uniquely defines a set of weights and biases for the network. Then, by the selection of weights and the definition of the network, it holds that 
	
	\begin{equation*}
		\label{eq:sB2}
		\begin{aligned}
			\sum_{(x,y) \in \cD} \cL(f_{\theta}(x),y) ={} & \cL(f_{\theta}(1),T) 
			\\={} & \cL \left(\sum_{i \in \{1,\ldots, n\}} w_{(s,h_i)} \cdot 1, T \right) \\={} & \left(\sum_{i \in \{1,\ldots, n\}} w_{(s,h_i)} \cdot 1 -T\right)^2 \\={} & \left(\sum_{a \in S} a -T\right)^2 \\={} & 0.
		\end{aligned}
	\end{equation*} The last equality holds since $S$ is a solution for $\mathcal{S}$. 
	
	For the second direction of the reduction, let $\theta = (w_e,b_e)_{e \in E}$ be a set of parameters for the network such that $ \cL(f_{\theta}(x),y) = 0$. Then, by the structure of the network, it implies that 
	
	\begin{equation}
		\label{eq:subsetsum1}
		\begin{aligned}
			0 ={} & \sum_{(x,y) \in \cD} \cL(f_{\theta}(x),y) \\={} & \cL(f_{\theta}(1),T) 
			\\={} & \cL \left(\sum_{i \in \{1,\ldots, n\}} w_{(s,h_i)} \cdot x, T \right) \\={} &		\left(\sum_{i \in \{1,\ldots, n\}} w_{(s,h_i)} \cdot x -T\right)^2 \\={} & \left(\sum_{i \in \{1,\ldots, n\}} w_{(s,h_i)} \cdot 1 -T\right)^2 
			\\={} & \left(\sum_{i \in \{1,\ldots, n\}} w_{(s,h_i)} -T\right)^2 
		\end{aligned}
	\end{equation} 
	
	From here, we define $S = \{a_i \in A \mid w_{(s,h_i)} \neq 0\}$; by \eqref{eq:subsetsum1} it follows that $\sum_{a \in S} a = \sum_{i \in \{1,\ldots, n\}} w_{s,h_i} =T$; thus, $S$ is a solution for $\mathcal{S}$. The above gives the two directions of the reduction. Hence, since subset sum is well-known to be NP-Hard (e.g., \cite{books0015106}), we reached the statement of the theorem. \qed

	We now give the proof of Theorem~\ref{thm:ETH}. The proof is based on a reduction from {\em binary constraint satisfaction problem (2-CSP)}. The input to a 2-CSP instance is a tuple $\Gamma = (G,\Sigma,C)$ such that $G = (V,E)$ is a constraint graph, $\Sigma$ is an alphabet, and $C = \{C_{(u,v)}\}_{(u,v) \in E}$ are constraints such that for all $(u,v) \in E$ it holds that $C_{(u,v)} \subseteq \Sigma \times \Sigma$ and $C_{(u,v)} \neq \emptyset$. An {\em assignment} to $\Gamma$ is a function $\phi: V \rightarrow \Sigma$. An assignment $\phi$ is called {\em feasible} if for all $(u,v) \in E$ it holds that $(\phi(u),\phi(v)) \in C_{(u,v)}$. The goal is to decide if there is a feasible assignment. We will use the following result of \cite{marx2007can}. 
	
	\begin{theorem}
		\label{thm:Chen}
		\cite{marx2007can} Assuming \textnormal{ETH}, for any computable function $f$ there is no algorithm that decides \textnormal{2-CSP} in time $f(k) \cdot N^{o \left(\frac{k}{\log k}\right)}$, where $k$ is the number of constraints and $N$ is the size of the alphabet. 
	\end{theorem}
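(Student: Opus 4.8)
The plan is to prove the contrapositive: an algorithm deciding 2-CSP in time $f(k)\cdot N^{o(k/\log k)}$ would refute ETH. First I would pin down the source of hardness. Starting from 3-SAT and applying the Sparsification Lemma, ETH yields a constant $\beta>0$ such that no algorithm decides \emph{sparse} 3-SAT instances, with $n$ variables and $m=O(n)$ clauses, in time $2^{o(n)}$. I would recast such an instance as a ``source'' binary CSP whose primal graph $H$ is a bounded-degree graph on $\Theta(n)$ vertices: standard gadgets turn each ternary clause into binary constraints over a constant-size domain, so that satisfiability of the 3-SAT formula is equivalent to satisfiability of this source CSP on $H$.

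The heart of the reduction is to realize the source instance inside a \emph{sparse, high-treewidth} host constraint graph $G=(V,E)$, which will be the constraint graph of the 2-CSP instance $\Gamma=(G,\Sigma,C)$ that I output. I would take $G$ to be an expander-like graph with $k=\lvert E\rvert$ constraints whose treewidth satisfies $w=\Theta(k)$; this sparse regime is exactly where the target bound $N^{o(k/\log k)}$ coincides with $N^{o(w/\log w)}$. The source graph $H$ is then \emph{embedded} into $G$: each vertex of $H$ is mapped to a connected region of $G$ and each edge of $H$ is routed along a path, so that a single $\Sigma$-variable of $G$ encodes the partial assignment of all source variables (and relayed edges) passing through it. The alphabet size $N=\lvert\Sigma\rvert$ is governed by the \emph{depth/congestion} of the embedding, i.e.\ how many source vertices and routed edges pass through any one vertex of $G$, and the constraint family $C$ merely enforces consistency of these encodings together with the original binary source constraints. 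By construction $\Gamma$ is satisfiable if and only if the 3-SAT formula is.

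The main obstacle, and the technical core, is the \textbf{embedding lemma}: every bounded-degree graph $H$ with $\Omega(w/\log w)$ edges embeds into an arbitrary graph of treewidth $w$ with bounded congestion. I would prove it by invoking the Excluded Grid Theorem to extract from $G$ a grid-like minor of dimension polynomial in $w$, and then routing the edges of $H$ through this grid via a vertex-disjoint-paths / concurrent-flow argument that keeps the number of paths through each host vertex bounded. It is precisely here that the $\log w$ loss enters: congestion-bounded embeddings pack only $\Theta(w/\log w)$ edges of $H$ into a treewidth-$w$ host, not $\Theta(w)$, which is ultimately why the statement carries $k/\log k$ rather than $k$ in the exponent.

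Finally I would balance parameters and close the argument. The embedding lets one trade the number of host vertices (hence $k$) against the information packed per host vertex (hence $\log N$), yielding the relation $(k/\log k)\cdot\log N=\Theta(n)$; under this balance a hypothetical $f(k)\cdot N^{o(k/\log k)}$ algorithm for $\Gamma$ runs in time $f(k)\cdot 2^{o(n)}$. The prefactor $f(k)$ is absorbed by the standard device of letting the grouping, and hence $k$, grow slowly enough with $n$ that $f(k)=2^{o(n)}$ while still $k\to\infty$ so the asymptotic little-$o$ applies; this gives a $2^{o(n)}$ algorithm for sparse 3-SAT, contradicting ETH, for the full range of alphabet sizes $N$. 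The only routine point left is that the whole reduction, building $G$, computing the embedding, and writing down $C$, runs in polynomial time, which holds because the embedding lemma is constructive and $G$ can be generated explicitly.
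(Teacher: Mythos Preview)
The paper does not prove this theorem at all: it is quoted verbatim as a known result of Marx (the citation \cite{marx2007can}) and then used as a black box in the reduction establishing Theorem~\ref{thm:ETH}. There is therefore no ``paper's own proof'' to compare against; the intended answer here is simply to cite Marx.

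That said, your sketch is a reasonable high-level outline of how Marx's argument actually goes: sparsify 3-SAT, view it as a binary CSP on a bounded-degree graph $H$, embed $H$ into a sparse host graph $G$ of large treewidth with bounded congestion, and encode the routed information in the alphabet so that $(k/\log k)\cdot\log N=\Theta(n)$. The identification of the embedding lemma as the source of the $\log k$ loss is correct. Two cautions if you ever flesh this out: first, in Marx's formulation the relevant parameter is the treewidth (or number of vertices) of the constraint graph rather than the number of edges, and the passage to ``$k$ = number of constraints'' in the present paper relies on the host graph being sparse, which you implicitly assume but should state; second, the step ``let $k$ grow slowly so that $f(k)=2^{o(n)}$'' needs care, since $f$ is an arbitrary computable function and the standard way to handle this is a diagonalization over a sequence of instances with $k\to\infty$, not a single choice of $k$ depending on $f$.
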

	
	Using the above, we can now prove the theorem. In the proof, we reduce 2-CSP instance $\Gamma$ to a D-NNT instance with a neuron for each variable and an output vertex for each constraint of $\Gamma$; thus, the width is in the worst case the number of constraints. We encode weights in the network such that a constraint $e$ is satisfied  if and only if the output corresponding to $e$ is within a restricted range of values. 
	
	\subsubsection{Proof of Theorem~\ref{thm:ETH}:}
	
	We give a reduction from 2-CSP to D-NNT on a two-layer network. Let $\Gamma = (G = (V,E_G),\Sigma,C)$ be a 2-CSP instance. Let $|\Sigma| = n$ be the size of the alphabet and let $|E| = k$ be the number of constraints of $\Gamma$. Let $g:\Sigma \rightarrow \{1,\ldots, n\}$ be an arbitrary bijection. Construct the following D-NNT instance $I = (N = (V = \{s\} \cup H \cup T,E_N), \mathcal{D}, (\sigma^v)_{v \in V \setminus \{s\}}, \mathcal{L}, \Theta = (W_e, B_e)_{e \in E}, \gamma)$.

	The network has one hidden layer $H = \{h_v \mid v \in V\}$, with one neuron for each vertex of $G$. There are $k$ output vertices $T = \{t_e \mid e \in E\}$, one for each constraint. Additionally, the edges are $E_N =  \{(s,h_v) \mid v \in V\} \cup \{(h_v, t_e) \mid v \in \textnormal{inc}(e), e \in E\}$, where $ \textnormal{inc}(e)$ are the vertices incident to an edge $e \in E$. The dataset is $\cD = \{(x,y)\}$, such that $x = 1$ and $y = (1)_{e \in E}$. For each neuron $h \in H$, the activation function is the identity function. The loss function on some input $(f_e)_{e \in E}$ is defined by $\cL((f_e)_{e \in E},y) = 1$ if for all $e \in E$ it holds that  $\left(g^{-1}\left(\ell_e \right), g^{-1}\left(m_e\right)\right) \in~C_{e}$, where we define $\ell_e = f_e \mod 2 \cdot n$ and $m_e = \frac{f_e- \ell_e}{2 \cdot n}$.  Otherwise, the value of the loss function is $\cL((f_e)_{e \in E},y) = 2$. 
	
	For each edge in the second layer of the form $e^u_{(u,v)} = (h_{u},t_{(u,v)})$, $u,v \in V$, define the weight and bias spaces $W_{e^u_{(u,v)}} = \{1\}$ and $B_{e^u_{(u,v)}} = \{0\}$. Moreover, for each edge $e^v_{(u,v)} = (h_{v},t_{(u,v)})$, $u,v \in V$, define $W_{e^v_{(u,v)}} = \{2 \cdot n\}$ and $B_{e_{(u,v)}} = \{0\}$. For each edge in the first layer $e_v = (s,h_v)$, $v \in V$, define $W_{e_v} = \{1,\ldots,n\}$ and $B_{e_v} = \{0\}$. Finally, define the error parameter as  $\gamma = 1$. Note that the encoding size of the reduced D-NNT instance is mostly affected by the size of the alphabet assuming that $n \gg k$.

	The running time of the reduction is polynomial in the encoding size of $\Gamma$. It remains to prove correctness. Let $\phi: V \rightarrow \Sigma$ be a feasible assignment for $\Gamma$. We find a set of weights that train the network below the threshold. For each $v \in V$, select the weight $w_{(s,h_v)} = g(\phi(v))$ for the edge $(s,h_v)$. Since all other edges have a single option for a weight (similarly, bias), this gives a unique selection $\theta = (w_e, b_e)_{e \in E_N}$ of parameters for all edges. 
	
	For the following, fix some $e = (u,v) \in E$. Let $f_e$ be the value obtained by vertex $t_e$ using the model $f_{\theta}(x)$. By the definition of $\theta$, it holds that $f_e = g(\phi(u))+2 \cdot n \cdot g(\phi(v))$. Let $\ell_e = f_e \mod 2 \cdot n$ and $m_e = \frac{f_e- \ell_e}{2 \cdot n}$. Then, it holds that $\ell = g(\phi(u))$ and $m_e = g(\phi(v))$; thus, $g^{-1}\left(\ell_e \right) = \phi(u)$ and $g^{-1}\left(m_e\right) = \phi(v)$. Since $\phi$ is a feasible assignment to $\Gamma$, it follows that $(\phi(u),\phi(v)) \in C_{e}$. Thus, $\left(g^{-1}\left(\ell_e \right), g^{-1}\left(m_e\right)\right) \in C_{e}$. As the above holds for any edge $e \in E$, we conclude that $\cL(f_{\theta}(x),y) = \cL((f_e)_{e \in E},y) = 1 = \gamma$.  
	
	For the second direction, let $\theta = (w_e,b_e)_{e \in E_N}$ be  a set of parameters that train the network to error bounded by the threshold $\cL(f_{\theta}(x),y) = \cL((f_e)_{e \in E},y) \leq \gamma$. 
	Define an assignment $\phi: V \rightarrow \Sigma$ by $\phi(v) = w_{(s,h_v)}$. Since $w_{(s,h_v)} \in \{1,\ldots, n\}$ for all $v \in V$ it holds that $\phi$ is indeed an assignment for $\Gamma$. It remains to show that $\phi$ is a feasible assignment. Fix some $e = (u,v) \in E$. Let $f_e$ be the value obtained by vertex $t_e$ using the model $f_{\theta}(x)$, let $\ell_e = f_e \mod 2 \cdot n$, and let $m_e = \frac{f_e- \ell_e}{2 \cdot n}$. Since the parameters $\theta$ satisfy that  $\cL(f_{\theta}(x),y) = \cL((f_e)_{e \in E},y) \leq \gamma$, by the definition of the loss function it follows that $\left(g^{-1}\left(\ell_e \right), g^{-1}\left(m_e\right)\right) \in C_{e}$ for all $e \in E$. By the definition of the weight spaces, it must hold that $\phi(u) = \ell_e$ and $\phi(v) = m_e$. Hence, $(\phi(u), \phi(v)) \in C_e$ for all $e = (u,v) \in E$, implying that $\phi$ is a feasible assignment. This gives the two directions of the reduction. 
	
	Based on the reduction, we can prove the theorem. Let $\mathcal{I}$ be the family of D-NNT instances on a two-layer network with the identity activation function, input dimension $d = 1$, and weights and biases only in $\{0,1\}$.
	Assume there exists a computable function $f$ and an algorithm that decides every $I \in \mathcal{I}$ in time $f(k) \cdot |I|^{o \left(\frac{k}{\log k}\right)}$, where $k$ is the width of the network. Then, by the above reduction, such an algorithm implies an algorithm for 2-CSP in time $f(k) \cdot n^{o \left(\frac{k}{\log k}\right)}$, where $k,n$ are the number of constraints and the size of the alphabet, respectively. Thus, assuming ETH, by Theorem~\ref{thm:Chen} such an algorithm does not exist which concludes the proof. \qed

	To prove Theorem~\ref{thm:SetCover}, we use the classic {\em exact set cover} problem. In the exact set cover problem, we are given a universe $U = \{u_1,\ldots, u_n\}$ of elements, a collection of sets $\mathcal{S} \subseteq 2^U$, and a cardinality bound $K \in \mathbb{N}$. The goal is to decide if there are $K$ sets $S_1, \ldots, S_K \in \mathcal{S}$ such that for all $u \in U$ there is {\em exactly} one set $S_i, 1\leq i \leq K$, such that $u \in S_i$.   
	In our reduction, we create a simple D-NNT instance with a single input, output, and neuron. Each data point $(x_i,y_i)$ corresponds to an element in the exact set cover instance. In addition, we have an extra data point used to guarantee a selection of at most $K$ sets. The dimensions correspond to sets, so the $j$-th entry in $x_i$ is $1$ if and only if element $i$ is in dimension $j$. The weights of the edges are also binary and correspond to a selection of sets.

	\subsubsection{Proof of Theorem 1.5:}
	
	We give a reduction from exact set cover. Let $\mathcal{C} = (U,\mathcal{S}, K)$ be a set cover instance with $n$ elements $U = \{u_1,\ldots, u_n\}$, $m$ sets $\mathcal{S} = \{S_1,\ldots, S_m\}$, and a cardinality bound $K$. Based on $\mathcal{C}$, we construct the following D-NNT instance $I = (N = (V = \{s\} \cup H \cup T,E), \mathcal{D}, (\sigma^v)_{v \in V \setminus \{s,t\}}, \mathcal{L}, \Theta = (W_e, B_e)_{e \in E}, \gamma)$. 
	
	\begin{itemize}
		\item The network has one hidden layer $H = \{h\}$ with a single neuron and one output. That is, $N = (\{s,h,t\}, E)$, where $E = \{(s,h), (h,t)\}$. 
		
		\item The data set is $\cD = \{(x_i,y_i)\}_{i \in \{0,\ldots, n\}}$. For all $i \in \{1,\ldots, n\}$ define $y_i = 1$ and $x_i \in \{0,1\}^m$ where $x_i[j] = 1$ if and only if $u_i \in S_j$, for all $j \in \{1,\ldots, m\}$. In addition, define $x_0 = (1)_{j \in \{1,\ldots, m\}}$ and $y_0 = K$.  
		
		\item The activation function of $h$ is the identity function.  
		
		%$f_e \in [2 \cdot n+1, 2 \cdot n^2+n]$ such that
		
		\item The loss function is $\cL(f_{\theta}(x),y) = (f_{\theta}(x)-y)^2$. 
		
		\item For the edge $e = (s,h)$ and any dimension $j \in \{1,\ldots, m\}$ define the weight space $W_{e}[j] = \{0,1\}$ and let $B_e = \{0\}$ be the bias space.  
		
		\item For the edge $e = (h,t)$ define the weight space $W_{e} = \{1\}$ and let $B_e = \{0\}$ be the bias space. 
		
		\item Define the error parameter as $\gamma = 0$. 
	\end{itemize}
	
	The running time can be computed in polynomial time. We prove that there is a solution for $\mathcal{C}$ if and only if the network can be trained with zero error. For the first direction, let $S^* = \{S_{i_1}, \ldots, S_{i_K}\} \subseteq \mathcal{S}$ be a solution for $\mathcal{C}$. For each $j \in \{1,\ldots, m\}$ define $w_{(s,h)}[j] = 1$ if and only if $S_j \in S^*$. We remark that this uniquely defines a set of parameters $\theta$ for the network. Then, since $\left|S^*\right| = K$ it follows that $w_{(s,h)} \cdot x_0 = \sum_{j \in \{1,\ldots, m\}} w_{(s,h)}[j] \cdot 1 = K$; hence, $f_{\theta}(x_0) = y_0 = K$ implying that $\cL(f_{\theta}(x_0), y_0) = 0$. In addition, as $S^*$ is a solution for $\mathcal{C}$, for each $i \in \{1,\ldots, n\}$ there is exactly one $j \in \{1,\ldots, m\}$ such that $u_i \in S_j$ and $S_j \in S^*$. Thus, $w_{(s,h)} \cdot x_i =  \sum_{j \in \{1,\ldots, m\}} w_{(s,h)}[j] \cdot 1 = 1$ and $f_{\theta}(x_i) = y_i = 1$, implying that $\cL(f_{\theta}(x_i), y_i) = 0$. Overall, we conclude that $\sum_{i \in \{0,1,\ldots, n\}} \cL(f_{\theta}(x_i), y_i) = 0 = \gamma$. 
	
	For the second direction of the reduction, let $\theta = (w_{(s,h)}, b_{(s,h)}, w_{(h,t)}, b_{(h,t)})\in \Theta$ such that $\sum_{i \in \{0,1,\ldots, n\}} \cL(f_{\theta}(x_i), y_i) \leq \gamma$. Define $S^*  = \{S_j \in \mathcal{S} \mid w_{(s,h)}[j] = 1\}$ as all sets corresponding to {\em active} entries of the weight vector $w_{(s,h)}$. We show below that $S^*$ is a solution for $\mathcal{C}$.
	Since $\gamma = 0$ and as $\cL$ is the standard sum of the squares, we conclude that for all $i \in \{0,1,\ldots, n\}$ it holds that $\cL(f_{\theta}(x_i), y_i) = 0$. In particular, as $\cL(f_{\theta}(x_0), y_0) = (f_{\theta}(x_0)-K)^2 = 0$, we conclude that $|S^*| = K$ by the definition of $x_0$.  It remains to show that each element is covered exactly once. Let $i \in \{1,\ldots, n\}$; since $\cL(f_{\theta}(x_i), y_i) = (f_{\theta}(x_i)-1)^2= 0$, it follows that there is exactly one $j \in \{1,\ldots, m\}$ such that $u_i \in S_j$ and $w_{(s,h)}[j] = 1$. Hence, there is exactly one $S \in S^*$ such that $u_i \in S$. It follows that $S^*$ is a feasible solution for $\mathcal{C}$. The proof follows from the above reduction using the classic NP-Hardness of exact set cover \cite{gary1979computers}. \qed

	\section{A Pseudo-Polynomial Algorithm} 
	\label{sec:algorithm}
	
	In this section, we describe a pseudo-polynomial algorithm for D-NNT on a two-layer network as described in Theorem~\ref{thm:algorithm}. The proofs from this section are deferred to the supplementary material.  For the remainder of this section, fix a D-NNT instance $I = (N = (V = \{s\} \cup H \cup \{t\},E), \mathcal{D}, (\sigma^h)_{h \in H}, \mathcal{L}, \Theta = (W_e, B_e)_{e \in E}, \gamma)$ with one hidden layer $H$, one input vertex $s$, one output $t$, with activation functions only on the hidden layer.  
	
	Let $d$ be the input dimension (with output dimension being one), let $H = \{h_1,\ldots, h_k\}$ be the neurons in the hidden layer, and let $n_0 = |\cD|$ be the size of the dataset. Let $e_1,\ldots, e_k$ be the edges connecting the input vertex $s$ to the $k$ neurons. With a slight abuse of notation, for all $a \in \{1,\ldots, k\}$ let $W_{e_a}[d+1] = B_{e_a}$; in addition, for all $i \in \{1,\ldots, n_0\}$ let $x_i[d+1] = 1$ (these notations slightly simplify our claims). We remark that the network is not necessarily fully-connected and that the activation and loss functions can be arbitrary.

	As we consider a pseudo-polynomial time algorithm, we may assume without the loss of generality that   $W_e, B_e \subset \mathbb{N}$ for every $e \in E$ (otherwise, simply scale all numbers in the input). Let $W_{\max}$ be the largest number that can be computed via the network with or without applying the activation and loss functions. Moreover, let $M = d \cdot W_{\max} \cdot k$. Observe that $M$ is pseudo-polynomial in the input size $|I|$.
	
	We define a dynamic program $\textnormal{\textsf{DimDP}}$ with an entry for each number $1,\ldots,M$, representing the value obtained at the input to the activation function on each neuron $q \in \{1,\ldots, k\}$, for each data point $(x,y) \in \cD$, and dimensions $1,\ldots, j$ in the $q$-th neuron.
	The entries  describe selections of prefixes of the parameters that compute a specific vector of outputs, one output for each data point. For simplicity, we first describe a simplified version of the dynamic program, and of a second dynamic program that will be introduced later, returning only boolean values; we later expand these programs to return selected parameters themselves using simple backtracking.  
	
	Formally, define $\textnormal{\textsf{DimDP}}: \{0,\ldots, M\}^{n_0} \times \{1,\ldots, d+1\} \times \{1,\ldots, k\} \rightarrow \{\textnormal{\textsf{true}}, \textnormal{\textsf{false}}\}$ inductively as follows. Fix some $q \in \{1,\ldots, k\}$; the definition of the dynamic program is analogous to all such values of $q$. For each $m \in \{0,\ldots, M\}^{n_0}$, define $\textnormal{\textsf{DimDP}}[m,q,1] = \textnormal{\textsf{true}}$ if and only if there is $w \in W_{e_q}[1]$ such that $w \cdot x_i[1] = m[i]$ for all $i \in \{1,\ldots, n_0\}$; otherwise, set $\textnormal{\textsf{DimDP}}[m,q,1] = \textnormal{\textsf{false}}$. Then, for all $j \in \{2,\ldots,d+1\}$ and $m \in \{0,\ldots, M\}^{n_0}$ define $\textnormal{\textsf{DimDP}}[m,q,j] = \textnormal{\textsf{true}}$ if and only if there are $w \in W_{e_q} [j]$ and $m_1, m_2 \in \{0,\ldots, M\}^{n_0}$ such that $\textnormal{\textsf{DimDP}}[m_1,q,j-1] = \textnormal{\textsf{true}}$, $m_1+m_2 = m$, and for all $i \in \{1,\ldots, n_0\}$ it holds that $w \cdot x_i[j] = m_2[i]$.
	The following result summarizes the correctness of $\textnormal{\textsf{DimDP}}$.
	
	\begin{lemma}
		\label{lem:DP}
		For all $m \in \{0,\ldots, M\}^{n_0}$, $q \in \{1,\ldots, k\}$, and $j \in \{1\ldots, d+1\}$ it holds that $\textnormal{\textsf{DimDP}}[m,q,j] = \textnormal{\textsf{true}}$ if and only if there are $\left(w[r] \in W_{e_q}[r]\right)_{r \in \{1,\ldots,j\}}$ such that for all $i \in \{1,\ldots, n_0\}$ it holds that $\sum^{j}_{r = 1} x_i[r] \cdot w[r] = m[i]$.
	\end{lemma}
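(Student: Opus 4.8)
The plan is to prove the lemma by induction on the dimension index $j$, following verbatim the inductive definition of $\textnormal{\textsf{DimDP}}$; the neuron index $q$ plays no active role and is simply carried along. For the base case $j=1$ there is essentially nothing to do: the asserted equivalence is exactly the defining clause of $\textnormal{\textsf{DimDP}}[m,q,1]$, since a family $(w[r])_{r\in\{1\}}$ is just a single weight $w[1]\in W_{e_q}[1]$, and the condition $\sum_{r=1}^{1} x_i[r]\cdot w[r]=m[i]$ for all $i\in\{1,\ldots,n_0\}$ is precisely $w[1]\cdot x_i[1]=m[i]$ for all $i$.

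For the inductive step, fix $j\in\{2,\ldots,d+1\}$ and assume the statement for $j-1$ (for every vector $m$ and every $q$). In the forward direction, if $\textnormal{\textsf{DimDP}}[m,q,j]=\textnormal{\textsf{true}}$ then by definition there are $w\in W_{e_q}[j]$ and $m_1,m_2\in\{0,\ldots,M\}^{n_0}$ with $\textnormal{\textsf{DimDP}}[m_1,q,j-1]=\textnormal{\textsf{true}}$, $m_1+m_2=m$, and $w\cdot x_i[j]=m_2[i]$ for all $i$. Applying the induction hypothesis to $m_1$ yields weights $w[1],\ldots,w[j-1]$ with $w[r]\in W_{e_q}[r]$ and $\sum_{r=1}^{j-1}x_i[r]\cdot w[r]=m_1[i]$ for all $i$; setting $w[j]=w$ gives $\sum_{r=1}^{j}x_i[r]\cdot w[r]=m_1[i]+m_2[i]=m[i]$ for all $i$, as required. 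Conversely, given weights $w[1],\ldots,w[j]$ with $w[r]\in W_{e_q}[r]$ and $\sum_{r=1}^{j}x_i[r]\cdot w[r]=m[i]$ for all $i$, define $m_1[i]=\sum_{r=1}^{j-1}x_i[r]\cdot w[r]$ and $m_2[i]=x_i[j]\cdot w[j]$, so that $m_1+m_2=m$ and (as argued in the next paragraph) $m_1,m_2\in\{0,\ldots,M\}^{n_0}$. The induction hypothesis applied to $m_1$, witnessed by $w[1],\ldots,w[j-1]$, gives $\textnormal{\textsf{DimDP}}[m_1,q,j-1]=\textnormal{\textsf{true}}$, and then the defining clause of $\textnormal{\textsf{DimDP}}[m,q,j]$ with witness $w=w[j]$ and the pair $m_1,m_2$ yields $\textnormal{\textsf{DimDP}}[m,q,j]=\textnormal{\textsf{true}}$, closing the induction.

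The only point genuinely needing attention is the membership $m_1,m_2\in\{0,\ldots,M\}^{n_0}$ in the converse direction, which is what makes $\textnormal{\textsf{DimDP}}[m_1,\cdot,\cdot]$ a legitimate table entry and hence what licenses the recursive appeal to the induction hypothesis. Here I would invoke the normalization assumption that after the initial scaling all weights, biases, and data entries are nonnegative integers: then $m_1[i]$ and $m_2[i]$ are nonnegative integers, each bounded above by the full pre-activation sum $\sum_{r=1}^{d+1}x_i[r]\cdot w[r]$ (a subsum, resp.\ a single summand, of nonnegative terms), and this sum is a value that can be computed via the network, hence at most $W_{\max}\le M$. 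This range check is the main obstacle, and it is routine; everything else is a direct unwinding of the recursion.
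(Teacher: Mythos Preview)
Your proposal is correct and follows essentially the same approach as the paper: induction on $j$ with $q$ fixed, unwinding the recursive definition of $\textnormal{\textsf{DimDP}}$ in both directions. The paper's proof is terser and does not explicitly separate the forward and converse directions, nor does it spell out the range check $m_1,m_2\in\{0,\ldots,M\}^{n_0}$ that you rightly flag; your treatment of that point is a welcome addition (and in fact the bound follows even more directly from $m_1[i]+m_2[i]=m[i]\le M$ together with nonnegativity, without needing to extend to the full sum up to $d+1$).
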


	\begin{proof}
		We prove the lemma separately for each $q \in \{1,\ldots, k\}$ by induction on $j$.  For the base case, where $j = 1$, by the definition of the dynamic program, for all $m \in \{0,\ldots, M\}^{n_0}$ it holds that $\textnormal{\textsf{DimDP}}[m,q,1] = \textnormal{\textsf{true}}$ if and only if there is $w \in W_{e_q}[1]$ such that $w \cdot x_i[1] = m[i]$ for all $i \in \{1,\ldots, n_0\}$. Assume that the claim holds for $j-1$, where $j \in \{2,\ldots,d+1\}$ and consider the claim for $j$. By the definition of $\textnormal{\textsf{DimDP}}$,
		for all $m \in \{0,\ldots, M\}^{n_0}$ it holds that $\textnormal{\textsf{DimDP}}[m,q,j] = \textnormal{\textsf{true}}$ if and only if there are $w \in W_{e_q} [j]$ and $m_1, m_2 \in \{0,\ldots, M\}^{n_0}$ such that $\textnormal{\textsf{DimDP}}[m_1,q,j-1] = \textnormal{\textsf{true}}$, $m_1+m_2 = m$, and for all $i \in \{1,\ldots, n_0\}$ it holds that $w \cdot x_i[j] = m_2[i]$. Using the induction hypothesis, $\textnormal{\textsf{DimDP}}[m_1,q,j-1] = \textnormal{\textsf{true}}$ if and only if there are $\left(w[r] \in W_{e_q}[r]\right)_{r \in \{1,\ldots,j-1\}}$ such that for all $i \in \{1,\ldots, n_0\}$ it holds that $\sum^{j-1}_{r = 1} x_i[r] \cdot w[r] = m_1[i]$. Thus, overall we proved that $\textnormal{\textsf{DimDP}}[m,q,j] = \textnormal{\textsf{true}}$ if and only if there are $\left(w[r] \in W_{e_q}[r]\right)_{r \in \{1,\ldots,j\}}$ such that for all $i \in \{1,\ldots, n_0\}$ it holds that $\sum^{j}_{r = 1} x_i[r] \cdot w[r] = m[i]$.
	\end{proof}

	Once we have computed the above dynamic program $\textnormal{\textsf{DimDP}}$, we compute a second dynamic program $\textnormal{\textsf{FinalDP}}: \{0,\ldots, M\}^{n_0} \times \{0,\ldots, k\} \rightarrow \{\textnormal{\textsf{true}}, \textnormal{\textsf{false}}\}$, where entry $\textnormal{\textsf{FinalDP}}[m,q]$ describes whether the total weight arriving to the output on input $x_i$ only from neurons $h_1,\ldots, h_q$ is precisely $m[i]$.  Formally, for all $m \in \{0,\ldots, M\}^{n_0}$ define $\textnormal{\textsf{FinalDP}}[m,0] = \textnormal{\textsf{true}}$ if and only if $m = (0)_{i \in \{1,\ldots, n_0\}}$.  For all $q \in \{1,\ldots,k\}$ and $m \in \{0,\ldots, M\}^{n_0}$ define $\textnormal{\textsf{FinalDP}}[m,q] = \textnormal{\textsf{true}}$ if and only if there are $m_1, m_2, m_3 \in \{0,\ldots, M\}^{n_0}$ such that $\textnormal{\textsf{FinalDP}}[m_1,q-1] = \textnormal{\textsf{true}}$, $m_1+m_3 = m$, $\textnormal{\textsf{DimDP}}[m_2,q,d+1] = \textnormal{\textsf{true}}$, and $m_3 = \sigma^{h_q} \left( m_2 \right)$. The following result summarizes the correctness of $\textsf{FinalDP}$.
	
	\begin{lemma}
		\label{lem:Final}
		For all $m \in \{0,\ldots, M\}^{n_0}$ and $q \in \{0,\ldots, k\}$ it holds that $\textnormal{\textsf{FinalDP}}[m,q] = \textnormal{\textsf{true}}$ if and only if there are $\left(w_{e_a}[r] \in W_{e_a}[r]\right)_{r \in \{1,\ldots,d\}, a \in \{1,\ldots, q\}}$ and $\left(b_{e_a} \in B_{e_a} \right)_{a \in \{1,\ldots, q\}}$ such that for all $i \in \{1,\ldots, n_0\}$ it holds that $\sum^{q}_{a = 1} \sigma^{h_a} \left(\sum^{d}_{r = 1} x_i[r] \cdot w_{e_a}[r] +b_{e_a}\right) = m[i]$. 
	\end{lemma}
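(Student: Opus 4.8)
The plan is to prove Lemma~\ref{lem:Final} by induction on $q$, mirroring the inductive structure of the definition of $\textnormal{\textsf{FinalDP}}$ and invoking Lemma~\ref{lem:DP} as the inner certificate for a single neuron. For the base case $q = 0$, the definition gives $\textnormal{\textsf{FinalDP}}[m,0] = \textnormal{\textsf{true}}$ exactly when $m = (0)_{i \in \{1,\ldots,n_0\}}$, while the empty sum $\sum_{a=1}^{0}(\cdots)$ equals $0$ in every coordinate; so the two sides of the claimed equivalence coincide trivially.

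For the inductive step, fix $q \in \{1,\ldots,k\}$ and assume the statement for $q-1$. Unfolding the recurrence, $\textnormal{\textsf{FinalDP}}[m,q] = \textnormal{\textsf{true}}$ iff there exist $m_1, m_2, m_3 \in \{0,\ldots,M\}^{n_0}$ with $\textnormal{\textsf{FinalDP}}[m_1,q-1] = \textnormal{\textsf{true}}$, $\textnormal{\textsf{DimDP}}[m_2,q,d+1] = \textnormal{\textsf{true}}$, $m_3 = \sigma^{h_q}(m_2)$, and $m_1 + m_3 = m$. By the induction hypothesis the first condition is equivalent to the existence of weights and biases for neurons $1,\ldots,q-1$ realizing the partial sum $m_1$ in every coordinate $i$. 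By Lemma~\ref{lem:DP} applied with $j = d+1$ — recalling the conventions $W_{e_q}[d+1] = B_{e_q}$ and $x_i[d+1] = 1$ — the second condition is equivalent to the existence of $(w_{e_q}[r] \in W_{e_q}[r])_{r \le d}$ and $b_{e_q} \in B_{e_q}$ with $\sum_{r=1}^{d} x_i[r] \cdot w_{e_q}[r] + b_{e_q} = m_2[i]$ for all $i$, and then $m_3 = \sigma^{h_q}(m_2)$ says precisely $m_3[i] = \sigma^{h_q}\!\left(\sum_{r=1}^d x_i[r] \cdot w_{e_q}[r] + b_{e_q}\right)$. Combining these with $m_1 + m_3 = m$ yields exactly the characterization claimed for $q$, so the induction closes.

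The one point requiring genuine care is that truncating all intermediate vectors to $\{0,\ldots,M\}^{n_0}$ discards no valid witness. For the ``if'' direction, from weights and biases realizing $m[i]$ for neurons $1,\ldots,q$ I would take $m_2[i] = \sum_{r=1}^d x_i[r] \cdot w_{e_q}[r] + b_{e_q}$, $m_3[i] = \sigma^{h_q}(m_2[i])$, and $m_1 = m - m_3$, and then verify that each of these — together with every prefix sum appearing inside the Lemma~\ref{lem:DP} certificate for neuron $q$ — is a nonnegative integer bounded by $M = d \cdot W_{\max} \cdot k$: each individual summand is at most $W_{\max}$ by the definition of $W_{\max}$ as the largest quantity computable in the network with or without activations, a $d$-term inner sum is at most $d \cdot W_{\max}$, an activation output is again at most $W_{\max}$, and the accumulated output over at most $k$ neurons is at most $k \cdot W_{\max} \le M$; nonnegativity of $m_1$ follows since $m_3[i]$ is a single summand of the nonnegative sum $m[i]$. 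Hence the truncated dynamic program ranges over the same witness set as the untruncated recurrence. I expect this bookkeeping — checking that the single bound $M$ simultaneously dominates pre-activation sums, post-activation values, and the cumulative output — to be the only real obstacle; the structural induction itself is immediate once Lemma~\ref{lem:DP} is available.
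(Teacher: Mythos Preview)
Your proposal is correct and follows essentially the same induction on $q$ as the paper's proof, invoking Lemma~\ref{lem:DP} at $j=d+1$ for the single-neuron certificate and combining via the recurrence. Your final paragraph on why the intermediate vectors $m_1,m_2,m_3$ stay inside $\{0,\ldots,M\}^{n_0}$ is more careful than the paper, which leaves this bookkeeping implicit.
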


	\begin{proof}
			We prove the lemma by induction on $q \in \{0,\ldots, k\}$. For the base case, where $q = 0$, by the definition of the dynamic program, for all $m \in \{0,\ldots, M\}^{n_0}$ it holds that $\textnormal{\textsf{FinalDP}}[m,q] = \textnormal{\textsf{true}}$ if and only if $m$ is the zero vector. Thus, the base case follows by the  definition of the dynamic program. 
		
		Assume that the claim holds for some $q-1$, where $q \in \{1,\ldots,k\}$ and consider the claim for $q$. By the definition of $\textnormal{\textsf{FinalDP}}$,
		for all $m \in \{0,\ldots, M\}^{n_0}$ it holds that $\textnormal{\textsf{FinalDP}}[m,q,j] = \textnormal{\textsf{true}}$ if and only if there are $m_1, m_2, m_3 \in \{0,\ldots, M\}^{n_0}$ such that $\textnormal{\textsf{FinalDP}}[m_1,q-1] = \textnormal{\textsf{true}}$, $m_1+m_3 = m$, $\textnormal{\textsf{DimDP}}[m_2,q,d+1] = \textnormal{\textsf{true}}$, and $m_3 = \sigma^{h_q} \left( m_2 \right)$. By the induction hypothesis, $\textnormal{\textsf{FinalDP}}[m_1,q-1] = \textnormal{\textsf{true}}$ if and only if there are $\left(w_{e_a}[r] \in W_{e_a}[r]\right)_{r \in \{1,\ldots,d\}, a \in \{1,\ldots, q-1\}}$ and $\left(b_{e_a} \in B_{e_a} \right)_{a \in \{1,\ldots, q-1\}}$ such that for all $i \in \{1,\ldots, n_0\}$ it holds that $\sum^{q-1}_{a = 1} \sigma^{h_a} \left(\sum^{d}_{r = 1} x_i[r] \cdot w_{e_a}[r] +b_{e_a}\right) = m_1[i]$. Moreover, by  Lemma 5.1 it holds that $\textnormal{\textsf{DimDP}}[m_2,q,d+1] = \textnormal{\textsf{true}}$ if and only if there are $\left(w[r] \in W_{e_q}[r]\right)_{r \in \{1,\ldots,d+1\}}$ such that for all $i \in \{1,\ldots, n_0\}$ it holds that $\sum^{d+1}_{r = 1} x_i[r] \cdot w[r] = m_2[i]$ (recall that we use $x_i[d+1] = 1$ and $W_{e_q}[d+1] = B_{e_q}$). Thus, for the above parameters, $\textnormal{\textsf{DimDP}}[m_2,q,d+1] = \textnormal{\textsf{true}}$ if and only if for all $i \in \{1,\ldots, n_0\}$ it holds that $\sigma^{h_q} \left(\sum^{d+1}_{r = 1} x_i[r] \cdot w[r] \right) = m_3$. 
		
		Combined, the above implies that $\textnormal{\textsf{FinalDP}}[m,q] = \textnormal{\textsf{true}}$ if and only if there are $\left(w_{e_a}[r] \in W_{e_a}[r]\right)_{r \in \{1,\ldots,d\}, a \in \{1,\ldots, q\}}$ and $\left(b_{e_a} \in B_{e_a} \right)_{a \in \{1,\ldots, q\}}$ such that for all $i \in \{1,\ldots, n_0\}$ it holds that $\sum^{q}_{a = 1} \sigma^{h_a} \left(\sum^{d}_{r = 1} x_i[r] \cdot w_{e_a}[r] +b_{e_a}\right) = m[i]$. This gives the statement of the lemma. 
	\end{proof}

	We slightly alter the above dynamic programs so that instead of only returning \textnormal{\textsf{true}} or \textnormal{\textsf{false}}, they also retrieve an actual set of parameters corresponding to the entry. Namely, for all $m \in \{0,\ldots, M\}^{n_0}$, $q \in \{1,\ldots, k\}$, and $j \in \{1,\ldots, d+1\}$ the dynamic program $\textnormal{\textsf{DimDP}}[m,q,j]$ keeps a set of parameters $\left(w[r] \in W_{e_q}[r]\right)_{r \in \{1,\ldots,j\}}$ such that for all $i \in \{1,\ldots, n_0\}$ it holds that $\sum^{j}_{r = 1} x_i[r] \cdot w[r] = m[i]$ (recall that $x_i[d+1] = 1$). Similarly, for all $m \in \{0,\ldots, M\}^{n_0}$ and $q \in \{0,\ldots, k\}$ the dynamic program $\textnormal{\textsf{FinalDP}}[m,q]$ keeps a set of parameters $\left(w_{e_a}[r] \in W_{e_a}[r]\right)_{r \in \{1,\ldots,d\}, a \in \{1,\ldots, q\}}$ and $\left(b_{e_a} \in B_{e_a} \right)_{a \in \{1,\ldots, q\}}$ such that for all $i \in \{1,\ldots, n_0\}$ it holds that $\sum^{q}_{a = 1} \sigma^{h_a} \left(\sum^{d}_{r = 1} x_i[r] \cdot w_{e_a}[r] +b_{e_a}\right) = m[i]$. The parameters can be selected efficiently using standard backtracking.  
	
	We now define a model based on the above dynamic programs. Select a set of parameters $\theta \in \Theta$ such that 
	$\sum_{i=1}^{n} \cL\left(f_{\theta}(x_i), y_i\right)$ is minimized over all possible parameter sets corresponding to $\textnormal{\textsf{FinalDP}}[m,k]$, for all $m \in \{0,\ldots, M\}^{n_0}$ such that $\textnormal{\textsf{FinalDP}}[m,k] = \textsf{true}$. Using the above, we obtain an algorithm deciding $I$ by checking whether the training loss over the selected model surpasses the threshold. The pseudocode of the algorithm is given in Algorithm~\ref{alg:algorithm}. The next lemma analyzes the running time and correctness of the algorithm.

	\begin{algorithm}[tb]
		\caption{Dynamic Programming Algorithm}
		\label{alg:algorithm}
		\textbf{Input}: a D-NNT instance $I$\\
		\textbf{Output}: {\textnormal{\textsf{true}} if $I$ is a Yes-instance}
		\begin{algorithmic}[1] %[1] enables line numbers
			\STATE Compute the dynamic program $\textnormal{\textsf{DimDP}}$. 
			\STATE Compute the dynamic program $\textnormal{\textsf{FinalDP}}$.
			\STATE Compute the model $f_{\theta}$ based on $\textnormal{\textsf{DimDP}}$ and $\textnormal{\textsf{FinalDP}}$. 
			\STATE \textbf{return true} if and only if $\sum_{i=1}^{n_0} \cL\left(f_{\theta}(x_i), y_i\right) \leq \gamma$.  
		\end{algorithmic}
	\end{algorithm}
	
	\begin{lemma}
		\label{lem:alg}
		Algorithm~\ref{alg:algorithm} correctly decides the \textnormal{D-NNT} instance and can be computed in time $M^{O(n_0)} \cdot d \cdot k$.
	\end{lemma}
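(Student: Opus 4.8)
The plan is to prove correctness and the running-time bound separately. For correctness, the key object is the set $\mathcal{M} = \{\, m \in \{0,\ldots,M\}^{n_0} : \textnormal{\textsf{FinalDP}}[m,k] = \textnormal{\textsf{true}} \,\}$, and the key claim is that $\mathcal{M}$ is exactly the set of output vectors $\big(f_{\theta}(x_1),\ldots,f_{\theta}(x_{n_0})\big)$ realizable by some $\theta \in \Theta$. One direction, together with the fact that each vector in $\mathcal{M}$ is witnessed by a concrete parameter set, is exactly Lemma~\ref{lem:Final} in its parameter-retrieving form: for $m$ in range, $\textnormal{\textsf{FinalDP}}[m,k] = \textnormal{\textsf{true}}$ iff there are parameters with $\sum_{a=1}^{k}\sigma^{h_a}\!\big(\sum_{r=1}^{d}x_i[r]\,w_{e_a}[r]+b_{e_a}\big) = m[i]$ for every $i$, i.e.\ iff those parameters produce the output vector $m$. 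For the reverse direction I would show that truncation to $\{0,\ldots,M\}^{n_0}$ loses nothing: since we assumed $W_e,B_e\subset\mathbb{N}$ (and the $x_i$ are non-negative integers after the initial scaling), every weighted sum inside a neuron, every prefix of such a sum, every activation value, and the network output is a non-negative integer bounded by $W_{\max}$, hence by $M = d\cdot W_{\max}\cdot k$; thus every realizable output vector already lies in $\{0,\ldots,M\}^{n_0}$ and appears in $\mathcal{M}$.

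Given this identification, correctness follows quickly. In line~3 of Algorithm~\ref{alg:algorithm} the algorithm uses backtracking through $\textnormal{\textsf{DimDP}}$ and $\textnormal{\textsf{FinalDP}}$ to select a parameter set $\theta$ realizing the vector $m\in\mathcal{M}$ minimizing $\sum_{i}\cL(m[i],y_i)$; since the retrieved $\theta$ satisfies $f_{\theta}(x_i)=m[i]$ for all $i$, this minimum equals $\min_{\theta'\in\Theta}\sum_{i}\cL\big(f_{\theta'}(x_i),y_i\big)$. Hence line~4 returns \textnormal{\textsf{true}} exactly when $\min_{\theta'\in\Theta}\sum_{i}\cL\big(f_{\theta'}(x_i),y_i\big)\le\gamma$, which is precisely the condition defining a YES-instance of D-NNT.

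For the running time I would count table sizes and per-entry work. The table $\textnormal{\textsf{DimDP}}$ has at most $(M+1)^{n_0}\cdot(d+1)\cdot k$ entries and $\textnormal{\textsf{FinalDP}}$ at most $(M+1)^{n_0}\cdot(k+1)$ entries; filling one entry requires ranging over a weight space (which has at most $M+1$ elements, as its members are naturals bounded by $W_{\max}\le M$), over one free vector in $\{0,\ldots,M\}^{n_0}$ with the remaining vectors determined (at most $(M+1)^{n_0}$ choices), and a constant number of evaluations of the polynomially computable functions $\sigma^{h_q}$ and $\cL$ on arguments bounded by $M$. Line~3 iterates over the at most $(M+1)^{n_0}$ true entries of $\textnormal{\textsf{FinalDP}}[\cdot,k]$, handling each by polynomial-time backtracking plus a loss evaluation, and line~4 is polynomial. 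Summing, the total is $\textnormal{poly}(|I|)\cdot M^{O(n_0)}\cdot d\cdot k$, which is the bound $M^{O(n_0)}\cdot d\cdot k$ of the lemma up to the $\textnormal{poly}(|I|)$ factor already present in Theorem~\ref{thm:algorithm}.

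I expect the main obstacle to be the reverse inclusion in the first step — making the $M$-boundedness argument airtight, in particular verifying that neither the within-neuron prefix sums tracked by $\textnormal{\textsf{DimDP}}$ nor the running totals over neurons tracked by $\textnormal{\textsf{FinalDP}}$ ever need to exceed $M$, so that restricting every table index to $\{0,\ldots,M\}^{n_0}$ is genuinely lossless. Everything else is routine bookkeeping layered on Lemmas~\ref{lem:DP} and~\ref{lem:Final}.
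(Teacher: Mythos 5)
Your proposal is correct and follows essentially the same route as the paper: both directions of correctness rest on Lemma~\ref{lem:Final} (in its parameter-retrieving form) plus the observation that every realizable output vector lies in $\{0,\ldots,M\}^{n_0}$, followed by the same table-size/per-entry running-time count. The $M$-boundedness issue you flag as the main obstacle is handled in the paper simply by the definition of $W_{\max}$ as the largest number computable via the network and $M = d\cdot W_{\max}\cdot k$, so your explicit verification is a harmless elaboration rather than a gap.
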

	
	\begin{proof}
			We start by showing the correctness of the algorithm; this follows easily by the definition of the dynamic programs computed in the course of the algorithm. Assume that $I$ is a yes-instance. That is, there are parameters $\theta^* = (w^*_{e_a}, b^*_{e_a})_{a \in \{1,\ldots,k\}} \in \Theta$ such that $\sum_{i=1}^{n_0} \cL\left(f_{\theta^*}(x_i), y_i\right) \leq \gamma$. Therefore, there is $m \in \{0,\ldots, M\}^{n_0}$ such that for all $i \in \{1,\ldots, n_0\}$ it holds that $\sum^{k}_{a = 1} \sigma^{h_a} \left(\sum^{d}_{r = 1} x_i[r] \cdot w^*_{e_a}[r]+b^*_{e_a}\right) = m[i]$.  Thus, by Lemma 5.2, it follows that $\textnormal{\textsf{FinalDP}}[m,k] = \textsf{true}$. Consequently, the dynamic program $\textnormal{\textsf{FinalDP}}[m,k]$ saves parameters $(w_{e_a}, b_{e_a})_{a \in \{1,\ldots,k\}} \in \Theta$ such that for all $i \in \{1,\ldots, n_0\}$ it holds that $\sum^{k}_{a = 1} \sigma^{h_a} \left(\sum^{d}_{r = 1} x_i[r] \cdot w_{e_a}[r]+b_{e_a}\right) = m[i]$. Therefore, by the selection of parameters $\theta$ for the model, it follows that $\sum_{i=1}^{n_0} \cL\left(f_{\theta}(x_i), y_i\right) \leq \sum_{i=1}^{n_0} \cL\left(f_{\theta^*}(x_i), y_i\right)  \leq \gamma$. Thus, the algorithm returns that $I$ is a yes-instance as required. 
		
		For the second direction, assume that the algorithm returns that $I$ is a yes-instance. Thus, by the definition of the algorithm, $\sum_{i=1}^{n_0} \cL\left(f_{\theta}(x_i), y_i\right) \leq \gamma$, where $\theta = (w_{e_a}, b_{e_a})_{a \in \{1,\ldots,k\}} \in \Theta$ are the selected parameters by the algorithm. Since the algorithm chooses a feasible set of parameters, it follows that $I$ is indeed a yes-instance. 
		
		It remains to analyze the running time. Constructing each entry of the dynamic programs \textnormal{\textsf{DimDP}} and \textnormal{\textsf{FinalDP}} can be done in time $M^{O(n_0)}$. Since the number of entries is in both programs is bounded by $M^{n_0} \cdot k \cdot (d+1)+M^{n_0} \cdot (k+1) = M^{O(n_0)} \cdot |I|$, the overall time required for computing these programs and finding the optimal set of parameters $\theta$ accordingly takes time $M^{O(n_0)} \cdot |I|$. By the above, the proof follows. 
	\end{proof}

	The above directly implies the proof of the theorem. 
	
	\subsubsection{Proof of Theorem~\ref{alg:algorithm}:}
	
	Follows from Lemma~\ref{lem:alg}. \qed

	\section{Proof of Theorem 1.2}
	\label{sec:proof1.2}

	We give below the proof of Theorem 1.2. It is based on a reduction that augments the output space. The only assumptions we make on the input D-NNT instance are that it has one source vertex (though with an unbounded dimension) and that the activation functions map $0$ to $0$ (e.g., ReLU).  
	
	\subsubsection{Proof of Theorem 1.2:} Consider some D-NNT instance $I = (N = (V = \{s\} \cup H \cup T,E), \mathcal{D}, (\sigma^v)_{v \in V \setminus \{s\}}, \mathcal{L}, \Theta = (W_e, B_e)_{e \in E}, \gamma)$. We construct as follows a reduced C-NNT instance $I' = (N = (V,E), \mathcal{D}', (\sigma'^v)_{v \in V \setminus \{s\}}, \mathcal{L}', \Theta', \gamma)$. Let $d$ be the number of input dimensions of $I$. With a slight abuse of notation, let $W_{e}[d+1] = B_e$ for all $e \in E$. Additionally, for edges $e = (u,v), u \neq s$, we assume, for the simplicity of the notation, that $W_{e}$ is still $(d+1)$-dimensional, but all dimensions $2,\ldots, d$ are always zero, i.e., $W_e[i] = \{0\}$ for all $i \in \{2,\ldots,d\}$. We also assume that the activation functions satisfy that  for all $v \in V \setminus \{s\}$, it holds that $\sigma^v(0) = 0$.  
	
	Assume without the loss of generality that all weights and biases given in $I$ are integers (this can be guaranteed by a simple scaling since $I$ is a D-NNT instance whose parameter space is given entirely in the input). Let $M_0$ be the maximum absolute value of a number in the input of $I$ and let $M = 2 \cdot M_0$. Let $f: E \times \{1,\ldots, d+1\} \rightarrow \mathbb{N}$ be some injective function such that (i) for all $e \in E$ and $i \in \{1,\ldots, d+1\}$ it holds that $f(e,i) > 10^M$ and (ii) for all $e_1,e_2 \in E$ and $i_1,i_2 \in \{1,\ldots, d+1\}$ it holds that $|f(e_1,i_1)-f(e_2,i_2)| > 10^{M}$. For some $e \in E$, and $i \in \{1,\ldots, d+1\}$, let $F(e,i) = [f(e,i), M \cdot f(e,i)] \cup [-f(e,i) \cdot M, -f(e,i)]$. By the definition of $f$, note that $(F(e,i))_{e \in E, i \in \{1,\ldots, d+1\}}$ are disjoint. Moreover, for each $e = (u,v) \in E$, let $P_e$ be some path from $s$ to $u$ in $N$.  
	
	We define below the instance $I'$. The output vertices are all vertices in $V$.  The input dimensions are $\{1,\ldots,d+2\}$. Define $\cD' = \{(x\oplus 0 ,y) \mid (x,y) \in \cD\} \cup \cD''$, where $\cD'' = \{(x_{e,i}, y_{e,i}) \mid e \in E, i \in \{1,\ldots, d+1\}\}$ and $\oplus$ denotes concatenation with an additional entry (containing zero). For each $e = (s,v) \in E$ and $i \in \{1,\ldots, d+1\}$ define $x_{e,i} = \chi_{e,i}$ where $\chi_{e,i}$ is a vector of dimensions $\{1,\ldots, d+2\}$ with zeros in all entries except for $f(e,i)$ in the $i$-th entry. In addition, for each $e = (u,v) \in E, u \neq s$, and $i \in \{1,\ldots, d+1\}$, define $x_{e,i}$ as the vector of dimensions $\{1,\ldots, d+2\}$ with zeros in all entries except for $f(e,i)$ in the $(d+2)$-th entry. %Assume without the loss of generality that for all $(x,y) \in \cD$ and $(x_{e,i},y_{e,i}) \in \cD$ it holds that $$
	
	For each $v \in V \setminus \{s\}$ define the activation function $\sigma^{'v}$ by  $\sigma^{'v}(z) = \sigma^{v}(z)$ if $|z| < 10^{M}$, by $\sigma^{'v}(z) = 0$ if $v \notin P_{e}$ where $z \in F(e,i)$ for some $e \in E$ and $i \in \{1,\ldots, d+1\}$; otherwise, define $\sigma^{'v}(z) = z$. 
	
	%otherwise, where $d_{\textnormal{in}}(v) = \left|\{u \mid (u,v) \in E\}\right|$ is the in-degree of $v$ in $N$.  
	
	%The loss function $\cL'$ is defined as $\cL'(z,y) = \cL(x,y)$ for all $y$ such that $(x,y) \in \cD$ for some $x$. In addition, for any $e = (u,v) \in E$ and $i \in \{1,\ldots, d+1\}$ define $\cL'(z,y_{e,i}) = 1$, where $z = (z^q)_{q \in V}$, if (i) for all $q \in V \setminus \{v\}$ such that there is a path from $q$ to $v$ in $N$ (including $u$) it holds that $z^q=10^{M}$, and (ii) $\frac{\left(z^v -(d_{\textnormal{in}}(v)-1)\right)}{10^{M}} \in W_e[i]$.  Otherwise, define $\cL'(z,y_{e,i}) = (2+\gamma) \cdot |V| \cdot |E| \cdot (d+1) \cdot |\cD|$.  
	
	The loss function $\cL'$ is defined as $\cL'(z,y) = \cL(x,y)$ for all $y$ such that $(x,y) \in \cD$ for some $x$. In addition, for any $e = (u,v) \in E$ and $i \in \{1,\ldots, d+1\}$ define $\cL'(z,y_{e,i}) = 1$, where $z = (z^q)_{q \in V}$, if (i) for all $q \in P_e$ (including $u$) it holds that $z^q = f(e,i)$, (ii) for all $q \in V \setminus (\{s\} \cup P_e)$ such that there is a directed path from $q$ to $v$ it holds that $z^q = 0$, and (iii) $\frac{z^v}{f(e,i)} \in W_e[i]$.  Otherwise, define $\cL'(z,y_{e,i}) = (2+\gamma) \cdot |V| \cdot |E| \cdot (d+1) \cdot |\cD|$.  
	
	The parameter set is $\Theta' = (W'_e,B'_e)_{e \in E}$, where $W'_e[i],B'_e = \mathbb{R}$, for all $e \in E$ and $i \in \{1,\ldots,d+2\}$ (as we construct a continuous NNT instance $I'$). Finally, define the error parameter as $\gamma' = \gamma+|E| \cdot (d+1)$.
	
	Observe that the reduction can be computed in time $\textnormal{poly}(|I|)$. We prove below the two directions of the reduction. Assume that $I$ is a yes-instance, i.e., there are parameters $\theta = (W_e,B_e) \in \Theta$ such that  $\sum_{(x,y) \in \cD} \cL\left(f_{\theta}(x), y\right) \leq \gamma$. Define the following parameters for $I'$. For any $e = (u,v) \in E$ and $i \in \{1,\ldots, d+1\}$ define $w'_e[i] = w_e[i]$. Moreover, define $w_{e}[d+2] = 1$. Let $f'_{\theta'}$ be the selected model. By the definition of the network, for any $(x,y) \in \cD$ the maximum number obtained using our defined weights on the input $x$ is bounded by $M_0$. Hence, by the definition of the activation functions, the computation is identical to the computation of $x$ on $I$. Thus, we conclude that $\sum_{(x,y) \in \cD} \cL'\left(f'_{\theta'}(x), y\right) = \sum_{(x,y) \in \cD} \cL\left(f_{\theta}(x), y\right) \leq \gamma$. 
	
	It remains to prove a bound on the rest of the dataset $\cD'$. Let $e = (u,v) \in E$ and $i \in \{1,\ldots,d+1\}$. Consider two cases. If $u = s$, then $x_{e,i} = \chi_{e,i}$; hence, $w'_{e}[i] \cdot x_{e,i} = w_e[i] \cdot f(e,i) = \sigma^{'v} \left( w_e[i] \cdot f(e,i) \right) = z^v(x_{e,i})$, where the last inequality holds by the definition of $\sigma^{'v}$ since $w_e[i] \cdot f(e,i) \in F(e,i) \cup \{0\}$. This implies that $w'_e[i] = \frac{z^v(x_{e,i})}{f(e,i)} \in W_e[i]$ since $w_e[i] \in W_e[i]$.   
	
	Otherwise, assume that $u \neq s$. Let $v_1,\ldots, v_C$ be a topological order of all vertices in $V \setminus \{s\}$ that have a directed path to $v$. We prove that for all $c = \{1,\ldots, C\}$ (i.e., including $u$ but excluding $v$), it holds that $z^q(x_{e,i}) = f(e,i)$ if $q \in P_e$ and $z^q(x_{e,i}) = 0$ otherwise. 
	The proof is by induction on $c$.   
	
	For the base case, let $q = v_1$. Since we consider a topological order, it follows that $(s,q) \in E$. Thus, it holds that $z^q(x_{e,i}) = \sigma^{'q} \left(x_{e,i} \cdot w_{(s,q)} \right) = x_{e,i}[d+2] \cdot 1 = f(e,i)$ if $q \in P_e$ and $z^q(x_{e,i}) = 0$ otherwise.
	Now, assume that the claim holds for $v_1,\ldots, v_{c-1} \in V$ for some $c \in \{2,\ldots, C\}$. Let $q \in V$ such that $q = v_c$. 
	If $(s,q) \in E$, the proof is analogous to the base case. Thus, assume that $(s,q) \notin E$.
	By the induction hypothesis and since we consider a topological sort, for all $q' \in V \setminus \{s\}$ such that $(q',q) \in E$ it holds that either $z^{q'}(x_{e,i}) = f(e,i)$ if $q' \in P_e$ or that $z^{q'}(x_{e,i}) = 0$ if $q' \notin P_e$. Thus,
	since $z^{q}(x_{e,i}) = \sigma^{q} \left( \sum_{(q',q) \in E} z^{q'}(x_{e,i})\right)$, we have the following cases. If $q \in P_e$, there is exactly one $q' \in P_e$ such that $(q',q) \in E$; thus, $z^{q'}(x_{e,i}) = f(e,i)$ and $z^{q''}(x_{e,i}) = 0$ for all $q'' \in V \setminus \{q'\}$ such that $(q'',q) \in E$ by the induction hypothesis.  It follows that $\sum_{(q',q) \in E} z^{q'}(x_{e,i}) = f(e,i)$ which implies that $z^{q}(x_{e,i}) = f(e,i)$ by the definition of the activation function. Otherwise,  $q \notin P_e$; in this case, there is at most one $q' \in P_e$ such that $(q',q) \in E$; thus, $z^{q'}(x_{e,i}) = f(e,i)$ and $z^{q''}(x_{e,i}) = 0$ for all $q'' \in V \setminus \{q'\}$ such that $(q'',q) \in E$ by the induction hypothesis. It follows that $\sum_{(q',q) \in E} z^{q'}(x_{e,i}) \in \{0, f(e,i)\}$, which implies that $z^{q}(x_{e,i}) = 0$ since $q \notin P_e$. 
	
	%By the induction hypothesis, it holds that $z^{q'}(x_{e,i}) = f(e,i)$ for all $q'$ such that $(q',q) \in E$ and $q' \in P_e$. 
	
	%Thus, by the definition of the activation function of $q$ it holds that $z^{q}(x_{e,i}) = \sigma^{q} \left( \sum_{(q',q) \in E} z^{q'}(x_{e,i})\right) = \frac{10^{M} \cdot d_{\textnormal{in}}(q)}{d_{\textnormal{in}}(q)} = 10^{M}$.  
	
	Therefore, by the above proof it holds that $z^{u}(x_{e,i}) =  f(e,i)$ in particular and that $z^{q}(x_{e,i}) =  0$ for all $q \in V$ such that $(q,v) \in E$. Thus, $w'_{e}[i] \cdot z^u(x_{e,i}) = w_e[i] \cdot f(e,i) = z^v(x_{e,i})$ by the definition of the activation function. Thus, $\frac{z^v(x_{e,i})}{f(e,i)} \in W_e[i]$ since $w_e[i] \in W_e[i]$. Therefore, by the definition of the loss function $\cL'$, it holds that $\cL'(f'_{\theta'}(x_{e,i}), y_{e,i}) = 1$. Hence,
	
	\begin{equation*}
		\begin{aligned}
			&\sum_{(x,y) \in \cD'} \cL'\left(f'_{\theta'}(x), y\right) 
			\\={} & \sum_{(x,y) \in \cD} \cL\left(f_{\theta}(x), y\right) + \sum_{(x_{e,i},y_{e,i}) \in \cD''} \cL\left(f_{\theta}(x), y\right)
			\\\leq{} & \gamma + |E| \cdot (d+1)
			\\= {} & \gamma'
		\end{aligned}
	\end{equation*}
	
	For the second direction of the reduction, let $\theta' = (w'_e,b'_e)_{e \in E} \in \Theta'$ be parameters for $I'$ such that $\sum_{(x,y) \in \cD'} \cL'\left(f'_{\theta'}(x), y\right) \leq \gamma'$. Define $\theta = (w_e,b_e) \in \Theta$ where $b_e = b'_e$ and $w_e[i] = w'_e[i]$ for all $e \in E$ and $i \in \{1,\ldots, d+1\}$. Let $f_{\theta}$ be the constructed model. By the definition of $\cL'$, the above implies that $\cL'(f'_{\theta'}(x_{e,i}), y_{e,i}) = 1$ for all $e = (u,v) \in E$ and $i \in \{1,\ldots, d+1\}$. Thus, again by the definition of $\cL'$, for all $e = (u,v) \in E$ and $i \in \{1,\ldots, d+1\}$ the following holds. (i) For all $q \in P_e$ (including $u$) it holds that $z^q(x_{e,i}) = f(e,i)$, (ii) for all $q \in V \setminus (\{s\} \cup P_e)$ such that there is a directed path from $q$ to $v$ it holds that $z^q(x_{e,i}) = 0$, and (iii) $\frac{z^v(x_{e,i})}{f(e,i)} \in W_e[i]$.
	
	%(i) For all $q \in V \setminus \{v\}$ such that there is a path from $q$ to $v$ in $N$ (including $u$) it holds that $z^q(x_{e,i}) = 10^{M}$, and (ii) $\frac{z^v(x_{e,i}) - (d_{\textnormal{in}}(v)-1)}{10^{M}} \in W_e[i]$. 

	Thus, by (i) and (ii) it holds that $\frac{z^v(x_{e,i})}{f(e,i)} = w'_e[i]$; hence, by (iii) we have
	$w_e[i] = w'_e[i] \in W_e[i]$ (recall that $w_e[d+1] = b_e$ hence this implies $b_e \in B_e$ as well).  Hence, $\theta \in \Theta$. 
	
	Finally, note that $\sum_{(x,y) \in \cD''} \cL'\left(f'_{\theta'}(x), y\right) \geq |E| \cdot(d+1)$.  Thus, since $\sum_{(x,y) \in \cD'} \cL'\left(f'_{\theta'}(x), y\right) \leq \gamma'$ it follows that $\sum_{(x,y) \in \cD} \cL'\left(f'_{\theta'}(x), y\right) \leq \gamma$. Therefore, it follows that $\sum_{(x,y) \in \cD} \cL\left(f_{\theta}(x), y\right) \leq \gamma$ by the definition of the loss function $\cL'$ and since $\theta \in \Theta$. \qed
	
	\bibliography{aaai25}
\end{document}